\newcommand{\N}{\mathbb{N}}
\newcommand{\R}{\mathbb{R}}
\DeclareMathOperator{\ReLU}{ReLU}
\DeclareMathOperator{\Sigmoid}{Sigmoid} 
\newcommand{\comb}{\mathsf{comb}}
\newcommand{\agg}{\mathsf{agg}}
\newtheorem{thm0}{Theorem}
\newtheorem{thm3}{Theorem}
\newtheorem{definition}[thm0]{Definition}
\newtheorem{remark}[thm3]{Remark}
\newcommand{\var}{\operatorname{var}}
\newcommand{\free}{\operatorname{free}}
\tikzset{main node/.style={rectangle,fill=blue!20,draw=none,minimum size=0.3cm,inner sep=3pt, rounded corners = 3pt,font=\sffamily},}
\def\@copyrightspace{\relax}
\title{The logic of rational graph neural Networks }
\author[S.~Khalife]{Sammy Khalife}
\address{Johns Hopkins University, Department of Applied Mathematics}
\email{khalife.sammy@jhu.edu}
\begin{document}
	
	\maketitle

	
	\begin{abstract}
		The expressivity of Graph Neural Networks (GNNs) can be described via appropriate fragments of the first order logic. Any query of the two variable fragment of graded modal logic (GC2) interpreted over labeled graphs can be expressed using a Rectified Linear Unit (ReLU) GNN whose size does not grow with graph input sizes \cite{barcelo2020logical}. Conversely, a GNN expresses at most a query of GC2, for any choice of activation function. In this article, we prove that some GC2 queries of depth $3$ cannot be expressed by GNNs with any rational activation function. This shows that not all non-polynomial activation functions confer GNNs maximal expressivity, 
		answering a open question formulated by \cite{grohe2021logic}.  This result is also in contrast with the  efficient universal approximation properties of rational feedforward neural networks investigated  by \cite{boulle2020rational}. We also present a rational subfragment of the first order logic (RGC2), and prove that rational GNNs can express RGC2 queries uniformly over all graphs.  
		
	\end{abstract}


	\section{Introduction}
	
	Graph Neural Networks (GNNs) are deep learning architectures for input data that incorporates some relational structure represented as a graph, and have proven to be very performant and efficient for various types of learning problems ranging from chemistry \cite{reiser2022graph}, social network analysis \cite{zhang2022pasca}, bioinformatics and protein-ligand interation \cite{khalife2021secondary,knutson2022decoding}, autonomous driving \cite{singh2022graph, gammelli2021graph, chen2021graph}, and techniques
	to enhance optimization algorithms \cite{khalil2017learning,khalil2022mip} to name a few. Understanding the ability of GNNs to compute, approximate or \emph{express} functions over graphs, and the dependence of their capacity on the activation function is beneficial for every aspect of learning: from the understanding of the target class, the design of a GNN for a given task, to the algorithmic training. For example, certain activation functions may yield more expressive GNNs, or require less parameters to express the same functions.  
	
	
	In this context, several approaches have been conducted in order to describe and characterize the expressive power of GNNs. The first approach consists in
	comparing GNNs to other standard computation models on graphs such as the \textit{color refinement} or the \textit{Wesfeiler-Leman} algorithms. 
	These comparisons stand to reason, because the computational models of GNNs, Weisfeiler-Leman, and color refinement algorithms are closely related. They all operate under the paradigm of inferring global graph structure from local neighborhood computations.  In that regard, it has been proven \cite{morris2019weisfeiler,xu2018powerful} that the color refinement algorithm precisely captures the expressivity of GNNs. More precisely, there is a GNN distinguishing two nodes of a graph if and only if color refinement assigns different colors to these nodes. This results holds if one supposes that the size of the underlying neural networks are allowed to grow with the size of the input graph. Hence, in his survey, \cite{grohe2021logic} emphasizes the fact that this equivalence has been established only for unbounded GNN, and asks: \emph{Can GNNs with bounded size simulate color refinement?} 
	In \cite{aamand2022exponentially}, the authors answer by the negative if the underlying neural network are supposed to have Rectified Linear Unit (ReLU) activation functions. In \cite{khalife2023power} the authors provide a generalization of this result, for GNNs with piecewise polynomial activation functions. Furthermore, explicit lower bounds on the neural network size to simulate the color refinement can be derived for piecewise-polynomial activation functions given upper bounds on the number of regions of a neural network with piecewise-polynomial activation.
	
	The second line of research to study the expressive power of GNNs is to characterize the types of boolean queries that a GNN can simulate. Given a boolean query $Q$ interpreted over labeled graphs, is there a GNN whose output characterizes the output of $Q$? For example, can a GNN express if a vertex of a graph is part of a clique of given size? Furthermore, can we characterize the set of queries that can be simulated by GNNs?  If the number of parameters of the GNN does not depend on the size of the input graph, the GNN is said to express the query \emph{uniformly}. Beyond theoretical considerations, uniform expressivity is also relevant from a practical standpoint as it captures the expressivity of GNNs with a fixed number of parameters with respect to the number of vertices in the input graphs. The suitable logic over labelled graphs for GNNs is a two variable fragment of \textit{graded model logic}, referred to as GC2. Any GNN expresses a query of this logic, and conversely, any query of this logic can be expressed by a GNN whose size and iterations only depends on the depth of the query \cite{barcelo2020logical,grohe2021logic}. For specific activation functions such as ReLUs, the size of a GNN required to express a given query of GC2 does not depend on the size of the input graph, but only on the number of \textit{subformulas}  (or \textit{depth}) of the query. Proofs  of this result \cite{barcelo2020logical,grohe2021logic} provide an explicit construction of a GNN with ReLU activations that expresses a given query. In recent results \cite{grohe2023descriptive}, the author provides a non-uniform description of the logic expressible by GNNs with rational piecewise-linear activations (or equivalently, rational ReLUs). The non-uniform result is presented for GNNs with general arbitrary real weights and activation functions. The author also presents new results about the expressivity of GNNs if random initialisation is allowed on the features of the vertices. In \cite{rosenbluth2023some}, the authors compare the impact of the aggregation function on the expressivity of GNNs, showing that GNNs with Max or Mean aggregation functions have distinct expressivity from the Sum aggregation GNNs. In this article, we focus on uniform expressivity and consider  the following question: \emph{What is the impact of the activation on the logic uniformly expressed by GNNs?} 
	
	A natural start for such investigation are polynomial activations. They have demonstrated clear limitations for feedforward Neural Networks (NNs) as exposed with the celebrated theorem of approximation \cite{leshno1993multilayer}, stating that polynomial activations are the \emph{only ones} leading to NNs being unable to approximate any continuous function on a compact set. For example, in the case of NNs, rational activations do not share this limitation and allow to approximate continuous functions on a compact set. Beyond this ability, rational activations yield efficient approximation power of continous functions \cite{boulle2020rational}, if one is allowed to consider different rational activation functions (of bounded degree) in the neural networks. In the same spirit, this article exposes a comparison of the power of the activation function in the case of GNNs. In particular, we will compare rational activations, with those of piecewise linear activations whose expressive power is known. 
	
	
	\medskip
	
	\textbf{Main contributions.} In this work we present an additional step towards a complete understanding of the impact of the activation function on the logical expressivity of GNNs. We show that the class of GNNs with \emph{rational activations} (i.e. all activation functions that are fractions of polynomials) have weaker expressivity than piecewise polynomial activations (including ReLUs). For theoretical and practical considerations, we limit our study to fractions having \emph{no real pole}.  
	We prove that GNNs with rational activations cannot express all GC2 queries uniformly over all graphs, while they can with ReLU GNNs.  Our approach also shows this result holds even if: i) the weights of the rational GNNs are arbitrary real numbers with infinite precision, and ii) the weights of the ReLU GNNs  are restricted to integers (also, the underlying neural networks are supposed to have finitely many linear pieces). This result holds for sum-aggregation function, but can be extended to any aggregation function that is rational with bounded degree. This shows how the power of graph neural networks can change immensely if one changes the activation function of the neural networks.  These results also seem to suggest that ReLU GNNs possess special ability for uniform expressivity over rational GNNs, a property in contrast with the efficient approximation power of rational NNs \cite{boulle2020rational}. In addition, we describe a fragment of GC2, called RGC2, that rational GNNs can express. Our proof shows that polynomial GNNs can express this fragment as well, and we conjecture that it is only very close to the one that rational GNNs express at best. We would like to point out that this work focusses solely on the ability of classes of GNNs to express given queries (i.e. can we even find a GNN in the class that does the job?). In particular, we do not address in this work the question of \emph{learning queries}, and the impact of the activation function on learning, namely from a sample complexity standpoint. 
	
	
	The rest of this article is organized as follows. Section \ref{sec:preliminaries} presents the definitions of GNNs and the background logic. In Section \ref{sec:formalstatements}, we  state our main result and compare it to the existing ones. Section \ref{sec:proofs} presents an overview of the proof of our main result, as well as proofs of the technical lemmata are presented. We conclude with some remarks and open questions in Section \ref{sec:openquestions}.
	\medskip
	
	\vspace{-0.401em}
	\section{Preliminaries}\label{sec:preliminaries}
	
	
	
	
	

	\subsection{Graph Neural Networks (GNNs) }\label{subsection:prelim2}
	
	We assume the input graphs of GNNs to be finite, undirected, simple, and vertex-labeled: 
	a graph is a tuple $G = (V(G),E(G),P_1(G), \allowbreak \cdots,P_{\ell}(G)) $ consisting of a finite vertex set $V(G)$, a binary edge relation $E(G) \subset V(G)^{2}$ that is symmetric and irreflexive, and unary relations $P_1(G),\cdots ,P_{\ell}(G) \subset V(G)$ representing $\ell > 0$ vertex labels. In the following, we suppose that the $P_i(G)$'s form a partition of the set of vertices of $G$, i.e. each vertex has a unique label. Also, the number $\ell$ of labels, which we will also call {\em colors}, is supposed to be fixed and does not grow with the size of the input graphs.  This allows to model the presence of features of the vertices of input graphs. In order to describe the logic of GNNs, we also  take into account access to the color of the vertices into the definition of the logic considered, as we shall see in Section \ref{subsection:prelim:1}. For a graph $G$ and a vertex $v \in V(G)$, $N_{G}(v):=\{ y \in V(G): \{x,y\} \in E \}$ is the set of neigbhors of $v$. If there is no ambiguity about which graph $G$ is being considered, we simply use $N(v)$. $\lvert G \rvert$ will denote the number of vertices of $G$. We use simple curly brackets for a set $X=\{x \in X\}$ and double curly brackets for a multiset $Y=\{\{y \in Y \}\}$. For a set $X$, $\lvert X\rvert $ is the cardinal of $X$. When $m$ is a positive integer, $\mathfrak{S}_m$ is the set of permutations of $\{1, \cdots, m\}$. $\lVert . \rVert$ is the Euclidean norm, i.e., for a vector $x \in \mathbb{R}^{m}$, $\lVert x \rVert := \left( \sum_{i=1}^{m} x_i^{2} \right)^{1/2}$. Finally, if $E$ is a real vector space, $I$ a subset of $E$, $\mathsf{span}\{I\}$ refers to the set of all finite linear combinations of vectors of $I$, i.e. $\mathsf{span}\{I\} := \{ \sum_{i=1}^{m} \lambda_i x_i: \lambda \in \R^{m}, x_1, \cdots, x_m \in I, m \in \N-\{0\}\}$.
	
	\begin{defi}[(Symmetric) rational fractions/functions]
		For a positive integer $m$, $\mathbb{R}(X_1, \cdots, X_m)$ refers to the field of rational fractions over the field $\mathbb{K} = \mathbb{R}$. For any positive integer $m$, a rational fraction is a pair $(P, Q)$ (represented as $\frac{P}{Q}$),where $P, Q \in \R[X_1, \cdots, X_m]$ are multivariate polynomials. The \emph{degree} of $R$ is the pair $(\mathsf{deg}(P), \mathsf{deg}(Q))$.  A rational fraction $
		R \in \mathbb{R}(X_1, \cdots, X_m)$ is said to be {\em symmetric} if for any permutation $\pi \in \mathfrak{S}_m$ of $\{1, \cdots, m\}$ and any $v_1, \ldots, v_m \in \mathbb{R}$, $R(v_{\pi(1)}, \cdots, v_{\pi(m)}) = R(v_1, \cdots, v_{m}) $. In following, we make no formal distinction between rational fractions and rational functions, defined as functions taking values as the ratio between two polynomial functions. We also restrict our GNN study to fractions having no real pole, i.e. the polynomial $Q$ has no root in $\R^{m}$. 
	\end{defi}
	\begin{defi}[{Neural Network (NN)}]\label{def:DNN}  Fix an {\em activation function} $\sigma: \R \to \R$. For any number of hidden layers $k \in \mathbb{N}$, input and output dimensions $w_0$, $w_{k+1} \in \mathbb{N}$, a $\mathbb{R}^{w_0} \rightarrow \mathbb{R}^{w_{k+1}}$, a {\em Neural Network (NN)} with activation function $\sigma$ is given by specifying a sequence of $k$ natural numbers $w_1, w_2, \cdots, w_k$ representing widths of the hidden layers and a set of $k+1$ affine transformations $T_i:
		\mathbb{R}^{w_{i-1}} \rightarrow \mathbb{R}^{w_{i}}$, $i=1, \ldots, k+1$. Such a NN is called a $(k+1)$-layer NN, and has $k$ hidden layers. The function $f: \mathbb{R}^{w_0} \rightarrow \mathbb{R}^{w_{k+1}}$ computed or represented by this NN is:
		$$f= T_{k+1}\circ \sigma \circ T_{k} \circ \cdots  T_2 \circ \sigma \circ T_1.$$ 
	\end{defi}

	
	In the following, the {\em Rectified Linear Unit} activation function $ \ReLU : \R \rightarrow \R_{\geq 0} $ is defined as $ \ReLU(x) = \max(0, x) $.  The {\em Sigmoid} activation function $ \Sigmoid : \R \rightarrow (0,1)$ is defined as $\Sigmoid(x) = \frac{1}{1 + e^{-x}}.$
	
	
	

	\begin{defi}[Graph Neural Network (GNN)]\label{def:GNN}
		A GNN is characterized by:
		\begin{itemize}
			\item[$\circ$] A positive integer $T$ called the number of iterations, positive integers  $(d_t)_{t\in \{1, \cdots, T\}}$ and $(d'_t)_{t\in \{0, \cdots, T\}}$ for inner dimensions. $d_0 = d'_0 = \ell$  is the input dimension of the GNN (number of colors) and $d_T$ is the output dimension.
			
			\item[$\circ$] a sequence of combination and aggregation functions $(\comb_t, \allowbreak \agg_t)_{t\in \{1, \cdots, T\} }$. Each aggregation function $\mathsf{agg}_t$ maps each finite multiset of vectors of $\R^{d_{t-1}}$ to a vector in $\R^{d'_{t}}$. For any $t \in \{1, \cdots, T\}$, each combination function $\mathsf{comb}_t: \mathbb{R}^{d_{t-1} + d'_{t}}\longrightarrow \mathbb{R}^{d_{t}}$ is a neural network with given activation function $\sigma:\mathbb{R}\longrightarrow \mathbb{R}$.
		\end{itemize}
		The update rule of the GNN at iteration $t \in \{0,\cdots, T-1\}$ for any labeled graph $G$ and vertex $v \in V(G)$, is given by:
		$$ \xi^{t+1}(v) = \mathsf{comb}_t (\xi^{t}(v), \mathsf{agg}_t \{ \{ \xi^{t}(w) \, : \, w \in \mathcal{N}_G(v) \} \}) $$
		Each vertex $v$ is initially attributed an indicator vector $\xi^{0}(v)$ of size $\ell$, encoding the color of the node $v$: the colors being indexed by the palette $\{1, \cdots, \ell\}$, $\xi^{0}(v)=e_i$ (the $i$-th canonical vector) if the color of the vertex $v$ is $i$.
		We say that a GNN has rational activations provided the underlying neural network $\mathsf{comb}$ has rational activation functions.
	\end{defi}

	\begin{rem}\label{remark:def:gnn}
		The type of GNN exposed in Definition \ref{def:GNN} is sometimes referred to as \emph{aggregation-combine} GNNs \emph{without global readout}. 
		Here are a few variants that can be found in the litterature: 
		\begin{itemize}
			\item \emph{Recurrent} GNNs, where $\mathsf{comb}_t$ and $\mathsf{agg}_t$ functions do not depend on the iteration $t$. The results presented in this article extends to recurrent GNNs as any aggregation-combine GNN without global readout can be reduced polynomially to a recurrent one.

			\item GNNs \emph{with global readout}, for which aggregation functions also take as input the embeddings of all the vertices of the graph. See Remark \ref{remarkLogicGlobalReadout} for known results from a logic standpoint. 
			
			\item General \emph{Message-passing} GNNs that allow operations before the aggregation on the neigbhors as well as the current vertex (\emph{targeted} messages). We refer to  \cite{grohe2024targeted} for a some elements of comparison of expressivity of targeted vs standard ones. 
		\end{itemize}
	\end{rem}
	
	

	\subsection{Logical background: GC2 and  queries}\label{subsection:prelim:1}
	In this subsection we present the logical foundations for queries in graph theory. We refer the interested reader to Appendix \ref{appendix:logic} containing details for the general construction. Let $\ell > 0$ be a fixed number of colors, and let  $G = (V (G), E(G), \allowbreak P^1(G), ..., P^{\ell}(G))$ be a colored graph. The first-order language of graph theory we consider is built up in the usual way from a \emph{alphabet} containing: 
	\begin{itemize}
		\item the logical connectives $\land, \lor, \lnot, \rightarrow$
		\item the quantifiers $\forall$ and $\exists$
		\item equality symbol $=$
		\item the \emph{universe} $A$ of the logic is given by $A := V$
		\item variables $x_0, x_1, \cdots$ (countably many)
		\item the \emph{vocabulary} $S$ is composed of: 
		\begin{itemize}
			\item[$\circ$] a binary edge relation symbol $E$: $(x,y)\in A^2$ are related if and only if $(x,y) \in E$.  
			\item[$\circ$] unary relation symbols $\mathsf{Col}_1, \cdots, \mathsf{Col}_{\ell}$ indicating if a vertex has a given color
		\end{itemize}
	\end{itemize}
	The set of \emph{formulas} in the logic is a set of strings over the 
	alphabet. To interpret these formulas and the logic over every graph, we need for each graph a map $I$ defined on the relations: 
	\begin{itemize}
		\item for every $i\in [\ell]$, $I(\mathsf{Col}_{i}): A \rightarrow \{0,1\}$ 
		\item  $I(E): A\times A \rightarrow \{0,1\}$
	\end{itemize}
	The pair $(A,I)$ is called an \emph{$S$-structure} for the first order logic FO(S). Provided this $S$-structure, one can safely construct simple examples of formulas at the \emph{graph level} (we will see in the next paragraph that we need something more to interpret them at the \emph{vertex level}). Namely, the following formula interpreted over a graph $G=(V,E)$ expresses that no vertex $v\in V$ is isolated: $\forall x\exists yE(x,y).$ Similarly, the formula $\forall x \neg E(x,x)$ expresses the fact that we do not want any self loops. A more interesting example is given by
	\begin{equation}\label{formula1}\psi := \forall x\forall y [  E(x,y)  \rightarrow E(y,x) \land x \neq y ]\end{equation}
	expressing that $G$ is undirected and loop-free. Similarly 
	\begin{align}\label{formula2}
		\begin{split}\phi := \forall x\exists y\exists  z( \neg(y = z)  \wedge E(x,y)  \wedge E(x,z))   \wedge \forall w(E(x,w)\rightarrow ( (w =  y) \vee (w = z))) 
		\end{split}
	\end{align}
	expresses that every node $x$ of the considered graph has exactly two out-neigbhors.
	
	\medskip

	\noindent \textbf{Free variables and assignments.} Since GNNs can output values for every vertex of a graph, the formulas we are interested in order to describe them from a logical standpoint must also take as ``input'' some vertex variable. Therefore, we need to add some component to the $S$-structure described as above, resulting in what we call an \emph{interpretation}. 
	
	First we need the concept of  \emph{bound} and \emph{free} variable of a formula. A bound variable of a formula is a variable for which the scope of a quantifier applies. In comparison, a free variable is not bound to a quantifier in that formula. Previous Formulas \ref{formula1} and \ref{formula2} contain no free variable. In contrast, the formula $\phi(x):= \exists y\exists z(\neg(y =  z) \wedge E(x,y)  \wedge E(x,z))$, which expresses that vertex $v$  has two out-neighbors,  has a single free variable $x$. To interpret formulas with one (resp. $k$) free variable, we need an \emph{assignment} that maps the set of free variables in the logic, to the universe $A$ (resp. $A^k$). 
	
	Now, an \emph{interpretation} of FO(S) is a pair $(\mathcal{U}, \beta)$ where $\mathcal{U}$ is an $S$-structure and $\beta$ is an assignment.  To interpret a formula with free variables, every graph is associated to an $S$-structure and an assignment; both are usually implicit in practice.  Any formula in FO(S) (with and without free variables) can now be thought of as a $0/1$ function on the class of all interpretations of FO(S):
	\begin{enumerate}
		\item If $\phi$ is a formula without a free variable (in this case, the formula is said to be a \emph{sentence}),  then any graph $G$ is an $S$-structure and is mapped to $0$ or $1$, depending on whether $G $ satisfies $\phi$ or not.
		\item If $\phi(x)$ is a formula with a single free variable $x$, then any pair $(G,v)$, where $G=(V,E)$ is a graph and $v \in V$, is an interpretation with $G$ as the $S$-structure and the assignment $\beta$ maps $x$ to $v$. Thus, every pair $(G,v)$ is mapped to $0$ or $1$, depending on whether $(G,\beta)$ 
		satisfies  $\phi$ or not. This example can be extended to handle formulas with multiple free variables, where we may want to model $0/1$ functions on subsets of vertices. 
	\end{enumerate}
	
	\begin{defi}(Queries as Boolean functions)
		Let $G$ be a graph and let $v\in V(G) $ be a vertex of $G$. If $Q$ has a free variable query (of type (2) above),  $Q(G,v) \in \{0, 1\}$ refers to the query interpreted and evaluated using  the pair $(G,v)$. 
	\end{defi}
	
	
	\begin{defi}
		The {\em depth} of a formula $\phi$ is defined recursively as follows. If $\phi$ is of the form $\mathsf{Col}_i$ for $i\in[\ell]$, 
		then its depth is $1$.  If $\phi = \neg \phi'$ or $\phi = \forall x\phi'$ or $\phi = \exists x\phi'$, then the depth of $\phi$ is the depth of $\phi'$ plus 1. If $\phi = \phi_1 \star \phi_2$ with $\star \in \{\vee,\wedge, \rightarrow, \leftrightarrow\}$, then the depth of $\phi$ is $1 + \max(\mathsf{depth}(\phi_1), \mathsf{depth}(\phi_2))$.
	\end{defi}
	In order to characterize the logic of GNNs, we are interested in a fragment of the first order logic, defined as follows.
	\begin{defi}[Graded (or guarded) model logic with counting (GC) and GC2 \cite{barcelo2020logical}]
		The alphabet of GC2 is composed of 
		\begin{itemize}
			\item the logical connectives $\land, \lor, \lnot, \rightarrow$
			\item the quantifiers $\forall$, and for every positive integer $N$, $\exists^{\geq N}$ 
			\item the \emph{universe} $A=V$
			\item variables $x_0, x_1, \cdots$ (countably many)
			\item the \emph{vocabulary} $S$:
			\begin{itemize}
				\item[$\circ$] a binary edge relation symbol $E$: $(x,y)\in A^2$ are related if and only if $(x,y) \in E$.  
				\item[$\circ$] unary relation symbols $\mathsf{Col}_1, \cdots, \mathsf{Col}_{\ell}$ indicating if a vertex has a given color
			\end{itemize}
		\end{itemize}
		In contrast with the first order logic, we do not have access to equality (=). $\exists$ is simply $\exists^{\geq 1}$. For a given unary relation $R$, the quantifier $\exists^{\geq N}xR(x)$ means that there exists at least $N$ elements $x$ verifying relation $R$. Similarly, $\exists^{\geq N}xE(x,y)$ means that at least $N$ vertices  adjacent to $y$ in the considered graph. 
		GC-formulas are formed from the atomic formulas
		by the Boolean connectives and quantification restricted to
		formulas of the form $\exists^{\geq p}y(E(x,y) \land \psi))$, where x and y are
		distinct variables and $x$ appears in  $\psi$ as a free variable. Note
		that every formula of GC has at least one free variable. For example, $\phi(x):=\lnot( \exists^{\geq 2}y(E(x,y) \land \mathsf{Col}_{1}(y)) \land \exists^{\geq 3}z(E(x,z) \land \mathsf{Col}_{2}(z)))$ is a GC formula. 
		
		We refer to the 2-variable fragment of GC as GC2 (i.e. formulas with only two variables). Equivalently, a GC2 formula $F$ is either $\mathsf{Col}_i(x)$ (returning 1 or 0 for one of the palette colors) or one of the following:
		$$\lnot \phi(x), \quad  \phi(x) \land \psi(x), or \quad  \exists^{\geq N } y  (E(x,y) \land \phi(y))$$
		where $N$ is a positive integer and $\phi$ and $\psi$ are GC2 formulas of lower depth than $F$. 
		
	\end{defi}
	
	\begin{exa}[\cite{barcelo2020logical}]
		All graded modal logic formulas naturally define unary queries.
		Suppose $\ell =2$ (number of colors), and $\mathsf{Col}_1 = \text{Red}$, $\mathsf{Col}_2 = \text{Blue}$. Let:
		$$\gamma(x) := \text{Blue}(x) \land
		\exists y ( E(x, y) \land \exists^{\geq 2} x ( Edge(y, x) \land \text{Red}(x)) $$ 
		$\gamma$ queries if $x$ has blue color, and has at least one neigbhor which has at least two red neighbors. Then $\gamma$ is in GC2. 
		Now,
		$$\delta(x) := \text{Blue}(x) \land
		\exists y ( \lnot E(x, y) \land \exists^{\geq 2} x E(y, x) \land \text{Red}(x))$$
		is not in GC2 because the use of the guard $\lnot E(x,y)$ is not allowed. However,
		$$\eta(x) :=  \lnot(
		\exists y (  E(x, y) \land \exists^{\geq 2} x E(y, x) \land \text{Blue}(x))$$
		is in GC2 because the negation $\lnot $ is applied to a formula in GC2.
	\end{exa}

	\subsection{GNNs and color refinement} Finally, our proofs require two additional definitions about color refinement that we include for completeness.

	\begin{defi}[Embeddings and refinement]
		Given a set $X$, an embedding $\xi$  is a function taking as input a graph $G$ and a vertex $v\in V(G)$, and returns an element $\xi(G,v)\in X$.  We say that an embedding $\xi$ refines  an embedding $\xi'$ if and only if for any graph $G$ and any $v \in V(G), \xi(G, v) = \xi(G, v') \implies \xi'(G, v) = \xi'(G,v')$.
		When the graph $G$ is clear from context, we use $\xi(v)$ as shorthand for $\xi(G,v)$.
	\end{defi}

	\begin{defi}[Color refinement]  Given a graph $G$, and $v \in V(G)$, let  $(G, v) \mapsto \mathsf{col(G,v)}$ be the function which returns the color of the node $v $. The color refinement refers to a procedure that returns a sequence of embeddings $cr^{t}$, computed recursively as follows:
		
		- $cr^{0}(G,v) = \mathsf{col}(G,v)$ 
		
		-  For $t\geq 0$, $ \mathsf{cr}^{t+1}(G,v) := ( \mathsf{cr}^{t}(G,v), \{\{  \mathsf{cr}^{t}(G,w): w \in N(v) \}\}) $
		
		In each round, the algorithm computes a coloring that is finer than the one computed in the previous round, that is, $\mathsf{cr}^{t+1}$ refines $ \mathsf{cr}^{t}$. 
		For some $t \leq n := \lvert G\rvert $, this procedure stabilises: the coloring does not become strictly finer anymore.
	\end{defi}
	
	The following connection between color refinement and GNNs will be useful to prove our main result. Notably, the theorem holds regardless of the choice of the aggregation function $\agg$ and the combination function $\comb$. 
	
	\begin{thm}[\cite{morris2019weisfeiler,xu2018powerful}]\label{refinementcoloringth}
		Let $d $ be a positive integer, and let $\xi$ be the output of a GNN after $d$ iterations. Then $\mathsf{cr}^{d}$ refines $\xi$, that is, for all graphs $G, G'$ and vertices $ v  \in  V(G)$, $v' \in V(G')$,
		$\mathsf{cr}^{(d)}(G,v)=\mathsf{cr}^{d}(G',v') \implies \xi(G,v)=\xi(G',v')$.
	\end{thm}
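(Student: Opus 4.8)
The plan is to prove, by induction on the iteration count $t \in \{0, 1, \dots, d\}$, the stronger statement that $\mathsf{cr}^t$ refines $\xi^t$, where $\xi^t$ denotes the GNN embedding after $t$ iterations; specializing to $t = d$ then gives the theorem. The whole argument rests on the observation that a GNN's update rule depends on a vertex's neighborhood only through the \emph{multiset} of neighbor embeddings, which is exactly the extra information that $\mathsf{cr}^{t+1}$ records on top of $\mathsf{cr}^t$. For the base case $t = 0$, I would use that $\xi^0(G,v)$ is by definition the one-hot vector encoding the color of $v$ while $\mathsf{cr}^0(G,v) = \mathsf{col}(G,v)$; hence $\mathsf{cr}^0(G,v) = \mathsf{cr}^0(G',v')$ forces $v$ and $v'$ to carry the same color, and therefore $\xi^0(G,v) = \xi^0(G',v')$.

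For the inductive step, assume $\mathsf{cr}^t$ refines $\xi^t$ on all graphs and suppose $\mathsf{cr}^{t+1}(G,v) = \mathsf{cr}^{t+1}(G',v')$. Unpacking the recursive definition of color refinement, this equality yields (i) $\mathsf{cr}^t(G,v) = \mathsf{cr}^t(G',v')$ and (ii) the multiset equality $\{\{ \mathsf{cr}^t(G,w) : w \in N(v) \}\} = \{\{ \mathsf{cr}^t(G',w') : w' \in N(v') \}\}$. From (i) and the induction hypothesis I obtain $\xi^t(G,v) = \xi^t(G',v')$. For (ii), I would fix a bijection $\rho : N(v) \to N(v')$ witnessing the multiset equality, so that $\mathsf{cr}^t(G,w) = \mathsf{cr}^t(G',\rho(w))$ for every $w$; applying the induction hypothesis pointwise then gives $\xi^t(G,w) = \xi^t(G',\rho(w))$ for every $w \in N(v)$, hence $\{\{ \xi^t(G,w) : w \in N(v) \}\} = \{\{ \xi^t(G',w') : w' \in N(v') \}\}$.

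To finish, I would push these equalities through the update rule $\xi^{t+1}(v) = \comb_t(\xi^t(v), \agg_t\{\{ \xi^t(w) : w \in N(v) \}\})$: since $\agg_t$ is by definition a function of the input multiset, equal multisets of neighbor embeddings yield equal aggregated vectors, and since $\comb_t$ is an ordinary function whose first argument also agrees, we conclude $\xi^{t+1}(G,v) = \xi^{t+1}(G',v')$, closing the induction. Observe that the argument uses nothing about $\comb_t$ or $\agg_t$ beyond their being well-defined functions --- in particular the activation function is irrelevant --- which is precisely why the statement is claimed for every GNN regardless of architecture. I do not anticipate a genuine obstacle; the only step requiring a little care is the multiset bookkeeping in the inductive step, namely converting ``equal multisets of colors'' into ``equal multisets of embeddings'' via a color-preserving bijection together with the induction hypothesis.
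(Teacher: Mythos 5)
Your proof is correct, and it is the standard argument for this result (which the paper itself does not prove but imports from \cite{morris2019weisfeiler,xu2018powerful}): induction on the iteration, with the base case handled by the one-hot color encoding and the inductive step by unpacking $\mathsf{cr}^{t+1}$ into the pair (own color, multiset of neighbor colors), transporting the multiset equality through a color-preserving bijection, and observing that $\agg_t$ and $\comb_t$ are just functions of their (multiset) arguments. The one point worth stating explicitly is that your induction hypothesis must be the two-graph version of refinement (for all $G, G'$ and all vertex pairs), since the inductive step applies it to neighbors $w \in V(G)$ and $\rho(w) \in V(G')$ rather than to $v, v'$ themselves; you do state it that way, so the argument goes through.
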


	\section{Uniform expressivity of GNNs }\label{sec:formalstatements}
	

	\begin{defi}[\cite{grohe2021logic}]\label{def:expressingquery}
		Suppose that  $\xi$ is the vertex embedding computed by a GNN. We say that a GNN expresses uniformly a unary query $Q$ there is a real $ \epsilon < \frac{1}{2} $ such that for all graphs G and vertices $v \in V(G)$.
		$$
		\begin{cases}
		\xi(G,v) \geq 1-\epsilon &  \text{if $v\in Q(G)$} \\
		\xi(G,v) \leq \epsilon  & \text{if $v \notin Q(G)$}
		\end{cases}
		$$
		
	\end{defi}

	We are now equipped to state the known previous results regarding the expressivity of GNNs: 
	\begin{thm}\label{theorem1}\cite{barcelo2020logical,grohe2021logic}
		Let Q be a unary query expressible in graded modal logic GC2. Then there is a GNN whose size depends only on the depth of the query, that expresses Q uniformly.
	\end{thm}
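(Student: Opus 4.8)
The plan is to prove the statement by structural induction on the GC2 formula $Q$, building an aggregate-combine GNN with ReLU activations whose feature vector at a vertex $v$ after $t$ iterations records, in dedicated coordinates, the Boolean truth value at $v$ of every subformula of $Q$ of depth at most $t$. If $Q$ has $m$ subformulas and depth $d$, this produces a GNN of width $O(m)$ run for $T=d$ iterations, all sizes being independent of the input graph, hence uniform in the sense of Definition \ref{def:expressingquery}. The output is the projection onto the coordinate associated with $Q$ itself, and since the construction will be exact on Boolean inputs, the threshold $\epsilon$ can be taken to be $0$.

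First I would fix, once and for all, the constant-size ReLU gadgets that implement the three constructors of GC2 on $\{0,1\}$-valued inputs:
\begin{itemize}
	\item[$\circ$] \emph{Colors.} The value of $\mathsf{Col}_i(x)$ at $v$ is the $i$-th coordinate of the initial one-hot feature $\xi^0(v)$, hence available at $t=0$ with no computation.
	\item[$\circ$] \emph{Negation.} $\lnot\phi$ is obtained from the coordinate carrying $\phi$ by the affine map $a\mapsto 1-a$.
	\item[$\circ$] \emph{Conjunction.} $\phi\wedge\psi$ is obtained from $(a,b)\in\{0,1\}^2$ as $\ReLU(a+b-1)$, which equals $1$ iff $a=b=1$.
	\item[$\circ$] \emph{Graded existential.} For $\exists^{\geq N}y\,(E(x,y)\wedge\phi(y))$, sum-aggregate the coordinate carrying $\phi$ over $N(v)$ to obtain an integer $s\in\{0,\dots,|N(v)|\}$, and then apply the ``step at $N$'' gadget $h_N(s):=\ReLU(s-N+1)-\ReLU(s-N)$, which maps every integer $\geq N$ to $1$ and every integer $\leq N-1$ to $0$.
\end{itemize}
Each combination function $\comb_t$ is then a fixed-size ReLU network that (i) copies forward every subformula coordinate already computed, and (ii) applies the relevant gadget above to produce each subformula of depth exactly $t$; this is legitimate because in GC2 the immediate subformulas of a depth-$t$ formula have strictly smaller depth and are therefore already present in $\xi^t(v)$, their summed neighbor values (in the existential case) being supplied by the aggregation $\agg_t$.

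Next I would assemble these pieces: order the subformulas of $Q$ by depth and, at iteration $t=1,\dots,d$, let $\comb_t$ compute exactly those of depth $t$ from the depth-$<t$ coordinates (and the summed neighbor features) while preserving all earlier ones. A short induction on $t$ shows that $\xi^t(v)$ carries $\phi(G,v)$ in its designated coordinate for every subformula $\phi$ of depth $\leq t$; taking $t=d$ gives $Q(G,v)$ exactly, and composing with the output projection --- noting that no width, number of iterations, or parameter ever depends on $|G|$ --- finishes the argument.

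The hard part is the graded-existential case, and it is worth isolating why. The neighbor sum $s$ is an arbitrary integer in $\{0,\dots,|G|-1\}$, so the thresholding gadget must output the correct Boolean value for \emph{all} integers at once, with a fixed number of parameters --- essentially a globally valid clamp of a half-line. The identity $h_N(s)=\ReLU(s-N+1)-\ReLU(s-N)$ achieves this precisely because ReLU is piecewise linear with a flat branch, and this is the one place where the piecewise-linear nature of the activation is essential; it is exactly the step that the remainder of the paper shows cannot be reproduced uniformly by a rational activation, which is what makes the contrast with Theorem \ref{theorem1} meaningful. The remaining bookkeeping --- identifying shared subformulas with a single coordinate, checking that the depth schedule is consistent so that no depth-$t$ formula needs another depth-$t$ formula as input, and verifying that every intermediate value fed to a gadget is in $\{0,1\}$ or a nonnegative integer so that the gadgets stay exact --- is routine.
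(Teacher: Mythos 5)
Your proposal is correct and follows essentially the same route as the paper's proof in Appendix \ref{appendix:proof}: induction on depth, one $\{0,1\}$-valued coordinate per subformula, sum aggregation, and a clipped-ReLU threshold for the graded existential (your $\ReLU(s-N+1)-\ReLU(s-N)$ is exactly the paper's $\min(1,\max(0,s-N+1))$ written as a difference of two ReLUs). The only cosmetic difference is that the paper applies the clipped ReLU uniformly to every coordinate and observes the construction can be made recurrent, neither of which changes the substance.
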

	\begin{rem}
		\label{exampleGNNGC2}
		Let  $\ell$ be the number of colors of the vertices in the input graphs, the family of GNNs with $\agg = \mathsf{sum}$, and $\comb(x,y)= \ReLU(Ax + By + C)$ (where $A \in \N^{\ell \times \ell}$, $B  \in \N^{\ell \times \ell}$ and $C \in \N^{\ell}$) is sufficient to express all queries of GC2 uniformly. This result follows from the constructive proof of Theorem \ref{theorem1} in Appendix \ref{appendix:proof}. Furthermore, for each query $Q$ of depth $q$, there is a GNN of this type with at most $q$ iterations that expresses $Q$ uniformly. 
	\end{rem}

	\begin{exa}
		Let $Q$ be the following GC2 query: $$Q(x) := \text{Red}(x) \land \left( \exists y E(x, y) \, \land \, \text{Blue}(y) \right)$$ asking if the vertex $x$ has red color, and if it has a neighbor with blue color. Writing the subformulas of $Q$: $\text{sub}(Q)=(Q1, Q2, Q3, Q4)$ 
		with $Q_1 = \text{Red}$, $Q_2=\text{Blue}$, $ Q_3 = \exists (E(x,y) \land Q_2(y) $, and $Q_4 = Q = Q_1 \land Q_3$, let 
		$$A  = \begin{pmatrix}
		1 & 0 & 0 & 0\\
		0 & 1 & 0 & 0 \\
		0 & 0 & 0 & 0 \\ 
		1 & 0 & 1 & 0  
		\end{pmatrix}, 
		B  = \begin{pmatrix}
		0 & 0 & 0 & 0\\
		0 & 0 & 0 & 0 \\
		0 & 1 & 0 & 0 \\ 
		0 & 0 & 0 & 0  
		\end{pmatrix}, 
		c  = \begin{pmatrix}
		0\\
		0 \\
		0  \\ 
		-1
		\end{pmatrix}$$
		and let $\sigma$ be the clipped ReLU function, i.e. $\sigma(\cdot) := \min(1, \max(0, \cdot))$ (the clipped ReLU can be computed by a neural network with ReLU activations).  Then, it can be verified that $Q$ can be computed in $4$ iterations with the update rule:
		$$\xi^{0}(G, v)=1,\quad \xi^{t+1}(G, v):=\sigma (A\xi^{t}(G, v) + B (\sum_{w\in N_{G}(v)} \xi^{t}(w))) $$i.e. $Q_i(G,v) = \xi^{4}(G, v)_i $. In particular,  $Q_i(G,v) =1 \iff \xi^{4}(G, v)_i = 1$. 
		The ability of GNNs to compute exactly GC2 queries is used in the  proof of Theorem \ref{theorem1}.    We emphasize here that one cannot mimic the proof for sigmoid activations, even by replacing exact \emph{computation} by uniform expressivity.
	\end{exa}

	In general, the logic of GNNs and their different variants beyond first-order queries remains elusive. However, for aggregation-combine GNNs without global readout (cf. Remark \ref{remark:def:gnn}),  Theorem \ref{theorem1} has the following partial converse:
	\begin{thm}\cite{barcelo2020logical}
		Let Q be a unary query expressible by a GNN and also expressible in first-order logic. Then Q is expressible in GC2. Furthermore, a logical classifier is captured by an aggregation-combine GNN without global readout if and only if it can be expressed in GC2 logic.
	\end{thm}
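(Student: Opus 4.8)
The statement is an ``if and only if'', and its nontrivial half is the implication ``$Q$ expressible by a GNN and by a first-order formula $\Rightarrow$ $Q$ expressible in GC2''; the converse inclusion GC2 $\subseteq$ (aggregation-combine GNN without global readout) is already provided by Theorem~\ref{theorem1}. So the plan is: assume $Q$ is a unary query that is FO-definable and is expressed uniformly, in the sense of Definition~\ref{def:expressingquery}, by some GNN $N$; since a GNN has a fixed number of iterations $T$, the margin $\epsilon<\frac{1}{2}$ is witnessed after exactly $T$ rounds. I would then show that $Q$ is forced to be invariant under bounded-depth color refinement, and invoke a logical characterization to conclude that an FO-definable, refinement-invariant query must be a GC2 query.

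First I would show that $Q$ factors through bounded-depth color refinement. By Theorem~\ref{refinementcoloringth}, $\mathsf{cr}^{T}$ refines the embedding $\xi$ computed by $N$: whenever $\mathsf{cr}^{T}(G,v)=\mathsf{cr}^{T}(G',v')$ we get $\xi(G,v)=\xi(G',v')$, and then the margin condition forces $Q(G,v)=Q(G',v')$. Hence $Q(G,v)$ depends only on $\mathsf{cr}^{T}(G,v)$. In particular $Q$ cannot distinguish two pointed graphs that are \emph{graded bisimilar} (equivalently, that have the same depth-$t$ unfolding tree for every $t$): a routine induction on $t$ shows that a graded bisimulation linking $v$ to $v'$ forces $\mathsf{cr}^{t}(G,v)=\mathsf{cr}^{t}(G',v')$ for all $t$, so $Q$ agrees on such pairs.

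Next I would bring in the first-order assumption. Fix an FO formula $\varphi(x)$ defining $Q$, of quantifier rank $q$. The remaining ingredient is a characterization theorem, valid over \emph{finite} graphs: a first-order formula with one free variable that is invariant under graded bisimulation is logically equivalent, over finite graphs, to a formula of graded modal logic with counting, and hence --- since each modal step only relates a vertex to its neighbours and only two variables are ever needed --- to a GC2 formula, whose depth is bounded by a function $d(q,T)$. Concretely, one exhibits the equivalent formula as the finite disjunction, over all graded-bisimulation types of depth $d(q,T)$ realized by some pointed graph satisfying $\varphi$, of the GC2 formula describing that type; the refinement-invariance established in the previous step guarantees that this disjunction is equivalent to $\varphi(x)$, which places $Q$ in GC2. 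Together with Theorem~\ref{theorem1} for the converse, this proves both the implication and the claimed equivalence.

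The heart of the argument --- and the main obstacle --- is this last characterization theorem. Over arbitrary structures it follows from a van Benthem--style compactness argument, but that tool is unavailable over finite graphs; one instead needs a Rosen/Otto--type proof based on locality (Gaifman or Hanf locality, together with the observation that depth-$d$ color refinement determines the isomorphism type of the depth-$d$ unfolding tree), and one must verify that the graded counting quantifiers together with the two-variable guarded syntax of GC2 are expressive enough to pin down each bounded-depth graded-bisimulation type. Making the bound $d=d(q,T)$ explicit, and checking that the formula produced genuinely lies in the two-variable fragment, is where the technical care is concentrated.
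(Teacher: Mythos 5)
This theorem is imported from \cite{barcelo2020logical} and the paper gives no proof of its own, so the comparison is with the cited argument: your reconstruction --- the GNN output factors through bounded-depth color refinement by Theorem~\ref{refinementcoloringth}, hence $Q$ is invariant under counting (graded) bisimulation, and an FO-definable query invariant under counting bisimulation is equivalent over finite graphs to a graded modal, i.e.\ GC2, formula by an Otto-type finite-model characterization --- is exactly that argument, with the genuinely hard ingredient correctly identified and deferred to the right place. One caution on your sketch of that ingredient: the ``finite disjunction over depth-$d(q,T)$ graded-bisimulation types'' is not finite as stated, because unbounded vertex degrees yield infinitely many counting-bisimulation types at every depth; finiteness requires using the quantifier rank $q$ to also cap the counting thresholds that $\varphi$ can distinguish, which is precisely what the Otto/Rosen machinery supplies and why it cannot be replaced by naive type enumeration.
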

	
	\begin{rem}\label{remarkLogicGlobalReadout}
		The logic that fits GNNs \emph{with global readout} is not GC2. If $C2$ is the fragment of the first order logic with counting quantifiers $(\exists^{\geq p})$ and with at most 2 variables; then we have the following result \cite{barcelo2020logical}: \emph{Let Q be a Boolean or unary query expressible in C2. Then there is a GNN with global readout that expresses Q.}
	\end{rem}

	In contrast with Theorem \ref{theorem1}, we prove:
	
	\begin{thm}\label{main:theorem}
		There are GC2 queries that no GNN with rational  activations can uniformly express.
	\end{thm}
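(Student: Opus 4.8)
\medskip
\noindent\emph{Proof proposal.} The plan is to exhibit a single GC2 query together with a two-parameter family of graphs on which no rational GNN can be correct, the obstruction being that a nonzero rational function has only finitely many zeros, so no rational map can vanish on an infinite set without being identically zero. The query should count the neighbours of $x$ satisfying an \emph{inner} condition that, read as a condition on a vertex's number of $\mathsf{Col}_1$-neighbours, is true only on a bounded set; a natural candidate is
\[
Q(x)\;:=\;\exists\,y\,\bigl(E(x,y)\,\wedge\,\lnot\,\exists\,x\,(E(y,x)\wedge\mathsf{Col}_1(x))\bigr),
\]
``$x$ has a neighbour with no $\mathsf{Col}_1$-neighbour'' (the inner quantifier re-uses the variable $x$, keeping the formula in the two-variable fragment). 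By Theorem~\ref{theorem1} and Remark~\ref{exampleGNNGC2}, $Q$ is expressed uniformly by a ReLU GNN -- a clipped ReLU combine function computes $\lnot\exists x(E(y,x)\wedge\mathsf{Col}_1(x))$ exactly in one round, and a further round forwards that exact $0/1$ value -- so $Q$ is genuinely in GC2 and the point at stake is only the smoothness of the activation.

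Assume toward a contradiction that a GNN $\mathcal A$ with rational activations (no real pole), sum-aggregation and $T$ iterations expresses $Q$ uniformly with gap $\epsilon<\frac12$. For integers $c\ge1$, $m\ge0$, let $\Delta_c$ be the rooted graph ``a $\mathsf{Col}_2$-vertex with $c$ pendant $\mathsf{Col}_1$-leaves'' (its root fails the inner formula); let $G'_{c,m}$ be a $\mathsf{Col}_2$-vertex $r$ joined to $m$ disjoint copies of $\Delta_c$, so $Q(G'_{c,m},r)=0$; and let $G_{c,m}$ be $G'_{c,m}$ with one extra pendant $\mathsf{Col}_2$-leaf attached to $r$ (a vertex that \emph{does} satisfy the inner formula), so $Q(G_{c,m},r)=1$. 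By the symmetry permuting the $m$ copies of $\Delta_c$, all of them carry the same embedding at every round; hence, each combine function being a rational map with no real pole and sum-aggregation at $r$ simply multiplying that common embedding by $m$, an induction on the round shows that every embedding -- in particular the numbers $\mathcal A(G_{c,m},r)$ and $\mathcal A(G'_{c,m},r)$ -- is a rational function of $(c,m)$ with no real pole (Theorem~\ref{refinementcoloringth} is what guarantees these quantities depend only on the finite ``rolled-up'' data shown here). In particular, for each fixed $c$ they have limits in $\mathbb R\cup\{\pm\infty\}$ as $m\to\infty$.

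The heart of the argument is the claim that, for a suitably chosen $c$,
\[
\mathcal A(G_{c,m},r)-\mathcal A(G'_{c,m},r)\;=\;o\bigl(1+|\mathcal A(G_{c,m},r)|\bigr)\qquad(m\to\infty).
\]
Granting it we are done: $\mathcal A(G_{c,m},r)\ge1-\epsilon$ and $\mathcal A(G'_{c,m},r)\le\epsilon$ for all $m$, so if $\mathcal A(G_{c,m},r)$ stays bounded the two would have to converge to a common limit lying in $[1-\epsilon,\infty)\cap(-\infty,\epsilon]=\varnothing$, while if $\mathcal A(G_{c,m},r)\to+\infty$ then $\mathcal A(G'_{c,m},r)$ would too, contradicting $\le\epsilon$. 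The intuition behind the claim is that $G_{c,m}$ differs from $G'_{c,m}$ only by one bounded-size gadget at $r$, a perturbation that is negligible \emph{relative to} the $m$ identical gadgets flooding $r$, and that a rational function with no real pole absorbs relatively-negligible perturbations of its argument near infinity because its gradient decays there. Concretely I would prove by induction on the round $t$ that for every vertex the embeddings in $G_{c,m}$ and in $G'_{c,m}$ differ by $o(1+\|\cdot\|)$ as $m\to\infty$ -- the base case is exact equality, the inductive step pushes an $o(1+\|\cdot\|)$ perturbation through one more combine function -- and apply this at the last combine function. The choice $c\ge1$ keeps the flooded term from degenerating: a rational map cannot send all of $\Delta_1,\Delta_2,\dots$ to a negligible round-$(T-1)$ embedding without also sending $\Delta_0$, the single good pendant leaf, to a negligible embedding, which would leave $\mathcal A$ unable to tell $G_{c,m}$ from $G'_{c,m}$ at all; since a nonzero rational function has finitely many zeros, such a $c$ exists.

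The step I expect to be the genuine obstacle is exactly this last claim, the subtlety being that the \emph{intermediate} embeddings of $\mathcal A$ may be unbounded in $m$ (only the final output is pinned near $0$ or $1$), so one must rule out the rational combine functions amplifying the $o(1+\|\cdot\|)$ discrepancy into a real separation. This needs a quantitative ``gradient decay at infinity'' estimate for rational functions with no real pole along the relevant rational curves, together with careful control of asymptotic orders so that the lone extra gadget is provably of lower order than the flooded contribution; identifying the admissible $c$ is part of the same estimate. The remaining ingredients -- that $Q\in\mathrm{GC2}$, the ReLU construction, and the rationality bookkeeping -- are routine.
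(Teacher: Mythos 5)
There is a genuine gap, and it sits exactly where you flagged it: the claim that
\[
\mathcal A(G_{c,m},r)-\mathcal A(G'_{c,m},r)=o\bigl(1+|\mathcal A(G_{c,m},r)|\bigr)
\]
is not proved, and the principle you invoke for it --- that a rational function with no real pole ``absorbs relatively-negligible perturbations of its argument near infinity because its gradient decays there'' --- is false. Polynomials are rational functions with no real pole, and their gradients grow at infinity; concretely, $R(b_1,b_2)=b_2^2$ sends $m(1,0)$ to $0$ and the relatively-negligible perturbation $m(1,0)+(0,1)$ to $1$, a unit jump that is not $o(1)$. This is precisely the danger in your construction: the aggregate at $r$ in $G_{c,m}$ differs from that in $G'_{c,m}$ by a \emph{fixed} vector $w$ (the extra leaf's embedding) which is generically transverse to the flooded direction $v_c$, and a rational combine function can annihilate the $v_c$-component while reacting at order $1$ to the $w$-component. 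Whether this can be ruled out for \emph{some} choice of $c$ (which is all you need) comes down to a genericity analysis of the vanishing of homogeneous parts of the numerator and denominator along the curve $c\mapsto v_c$, compounded over $T$ rounds in which intermediate embeddings are unbounded; none of that is in the proposal, and it is the entire difficulty. A secondary symptom of the same problem is that your family has only two parameters $(c,m)$ with all $m$ good gadgets identical, so the root's output is merely a bivariate rational function, against which there is no a priori structural obstruction.

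The paper avoids this by using an $m$-parameter family: the tree $T[k_1,\dots,k_m]$ whose root has $m$ children of independently chosen degrees $k_1,\dots,k_m$ (the query being ``every neighbour of $s$ has degree $\ge 2$'', false exactly when some $k_i=0$). Lemma~\ref{lemma:polyfunction} shows the root's embedding is a \emph{symmetric} rational function of $(k_1,\dots,k_m)$ whose numerator and denominator degrees are bounded independently of $m$. For $m$ exceeding that bound, the dominant monomial along a suitable monomial curve $k_i=t^{u_i}$ (extracted via Lemmas~\ref{lemma1} and~\ref{lemma2}) must omit some variable, say $k_m$; setting $k_m=0$ --- which flips the truth value of the query --- therefore does not change the limit of the output as $t\to\infty$, giving the contradiction. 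That degree-versus-number-of-variables pigeonhole is exactly the ingredient that replaces your unproved perturbation claim, and it is unavailable in a two-parameter family. Your query choice, the ReLU upper bound, and the rationality/symmetry bookkeeping are all fine and essentially match the paper's; the missing piece is the asymptotic separation argument itself.
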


	Equivalently,  if $\mathcal{L}_{R}$ (resp. $\mathcal{L}_{ReLU}$) is the set of first order logical queries uniformly expressible by GNNs with rational activations (resp. ReLU activations), then  $\mathcal{L}_{R} \subsetneq (\mathcal{L}_{ReLU} =\text{GC2} )$. The query used in our proof uses logical negation: $$Q_p(s) := \lnot \left(\exists^{\geq 1} x ( E(s,x) \land \exists^{\leq (p-1)} s E(x,s) ) \right)$$ and can be extended to a large family of queries (cf. Remark \ref{remark:other:queries}). We can obtain the following corollary by immediate contradiction, as GNNs with rational activations and aggregations can simulate logical negation:
	\begin{cor}\label{main:corollary}
		There are queries of GC2 using only the guarded existential quantifiers with counting $\exists^{\geq K}$E, the logical and $\land$ and the atomic formulas Col(.), that GNNs with rational activations cannot uniformly express.
	\end{cor}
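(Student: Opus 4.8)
The plan is to reduce to Theorem~\ref{main:theorem} by exploiting the fact that the class of GNNs with rational activations (and rational aggregations of bounded degree) is closed under post-composing the output with an affine map, and that Boolean negation of a $\{0,1\}$-valued target is realized by such an affine operation. First I would record a \emph{negation lemma}: if a GNN $\xi$ with rational activations expresses a unary query $Q$ uniformly with margin $\epsilon<\frac{1}{2}$ in the sense of Definition~\ref{def:expressingquery}, then the GNN obtained from $\xi$ by post-composing its output with the affine map $t\mapsto 1-t$ expresses $\lnot Q$ uniformly with the same margin. This is immediate: by Definition~\ref{def:DNN} the outermost operation of the neural network computing the last combination function is an affine transformation $T_{k+1}$, so $(t\mapsto 1-t)\circ T_{k+1}$ is again affine and the modified combination function is still a neural network with the same activations; the aggregation functions are untouched (so rationality, and rationality of bounded degree, are preserved); and $\xi(G,v)\ge 1-\epsilon\iff 1-\xi(G,v)\le\epsilon$ together with the symmetric implication shows that $1-\xi$ separates $\lnot Q(G)$ from its complement with margin $\epsilon$.

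Next I would exhibit the witness query inside the positive fragment. Take $R_p(s):=\exists^{\geq 1}x\bigl(E(s,x)\wedge\exists^{\geq p}s\,E(x,s)\bigr)$, which is built solely from the guarded counting quantifiers $\exists^{\geq K}E$ and the connective $\wedge$ (it uses no negation, and colors are not even needed), hence it lies in the fragment named in the corollary. The substantive input is that its negation $\lnot R_p$ --- the variant of $Q_p$ in which the inner constraint ``$x$ has at most $p-1$ neighbours'' is replaced by ``$x$ has at least $p$ neighbours'' --- is, for $p$ large enough, \emph{not} uniformly expressible by any GNN with rational activations. This is exactly Theorem~\ref{main:theorem} applied to the extended family of queries of Remark~\ref{remark:other:queries}.

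Finally, the contradiction: if some GNN with rational activations expressed $R_p$ uniformly, the negation lemma would produce a GNN with rational activations expressing $\lnot R_p$ uniformly, contradicting the previous paragraph. Hence $R_p$ is a GC2 query built only from $\exists^{\geq K}E$, $\wedge$ and $\mathsf{Col}(\cdot)$ that no GNN with rational activations expresses uniformly, which is the statement. The only step requiring care --- and the place I would expect the only (mild) subtlety --- is checking that the family in Remark~\ref{remark:other:queries} really contains such a negated positive-fragment query, i.e.\ that the lower-bound argument behind Theorem~\ref{main:theorem} does not depend on the inner counting constraint having the form ``$\exists^{\leq p-1}$'' rather than ``$\exists^{\geq p}$''; once that is granted, the rest is the ``immediate contradiction'' promised in the text.
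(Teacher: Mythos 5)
Your negation lemma is sound and is exactly the mechanism the paper invokes (``GNNs with rational activations and aggregations can simulate logical negation''): post-composing the final affine layer with $t\mapsto 1-t$ keeps the network rational and swaps the two cases of Definition~\ref{def:expressingquery}. The gap is in your choice of witness. You take $R_p(s)=\exists^{\geq 1}x\,(E(s,x)\land\exists^{\geq p}s\,E(x,s))$ and need $\lnot R_p$ to be non-expressible. But $\lnot R_p$ says ``every neighbour of $s$ has degree at most $p-1$'', which is precisely the query $\tilde{Q}_{p-1}$ of the remark following the proof of Theorem~\ref{main:theorem} --- the family for which the paper explicitly states that its lower-bound argument \emph{does not} go through. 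The obstruction is concrete: on the trees $T[k_1,\dots,k_m]$, $\lnot R_p$ is true exactly on a bounded box such as $\{0,1\}^m$ (for $p=3$), and a bounded-degree symmetric polynomial, e.g.\ $1-\sum_i k_i^2(k_i^2-1)$, already separates that box from the rest of $\mathbb{N}^m$ with a fixed margin; the comparison of limits along $(t^{u_1},\dots,t^{u_m})$ and $(t^{u_1},\dots,t^{u_{m-1}},0)$ yields no contradiction because for large $t$ both points lie outside the box and the query assigns them the same truth value. So the ``mild subtlety'' you flag at the end is in fact the crux, and it resolves against you: the argument genuinely depends on the inner constraint being $\exists^{\leq p-1}$ (i.e.\ $\lnot\exists^{\geq p}$), which makes the truth set $\{k:\text{all }k_i>0\}$ rather than a bounded box.

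Following the paper's one-line reasoning, the intended witness is instead $\lnot Q_p(s)=\exists^{\geq 1}x\,(E(s,x)\land\exists^{\leq p-1}s\,E(x,s))$, whose non-expressibility does follow from Theorem~\ref{main:theorem} plus your negation lemma. Be aware, however, that this formula still contains $\exists^{\leq p-1}$, i.e.\ a negated counting quantifier nested under the outer guard, so a single application of the output-level negation lemma does not by itself yield a query built \emph{only} from $\exists^{\geq K}E$, $\land$ and $\mathsf{Col}(\cdot)$; eliminating that inner negation requires an additional argument, and in particular your specific reduction cannot be repaired by adjusting $p$, since $\lnot R_p$ lands on the wrong side of the paper's dichotomy for every $p$.
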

	
	In Section \ref{sec:rgc2}, we build towards a description of the logic expressible via rational GNNs.

	\section{Rational GNNs have limited expressivity}\label{sec:proofs}
	
	
	\textbf{Overview.} To prove our result we construct a GC2 query $Q$ that no GNN with rational activation 
	can express over all graphs. We prove this statement by contradiction: on the one hand we interpret the embedding returned by a rational GNN on a set of given input graphs, as a rational function of some parameters of the graph structure. On the other hand, we interpret $Q$ on the same set of input graphs. We show that if GNN were to uniformly express $Q$, then the rational function obtained by the first evaluation cannot verify the constraints imposed by $Q$.  Our approach and can easily extend to a large family of GC2 queries.
	
	Similarly to those considered in \cite{khalife2023power}, our set of inputs are formed using rooted unicolored trees of the form shown in Figure \ref{figpolytreecounter} which is a tree of depth two whose depth one vertices have prescribed degrees $k_1, \cdots, k_m$, with $k_1, \cdots , k_m \geq 0$. We first collect three elementary Lemmas, one that will be useful to \textit{extract} monomials of largest degree in a multivariate polynomial (Lemma \ref{lemma1}) then used for rational fractions (Lemma \ref{lemma2}). Since the trees are parameterized by $m$-tuples of integers $k_1, \ldots, k_m$, the embedding of the root node computed by the GNN at any iteration is a function of these $m$ integers. Since the activations are rational, these embeddings of the root node are multivariate symmetric rational functions of $k_1, \ldots, k_m$ (Lemma~\ref{lemma:polyfunction}). Furthermore, the degree of these rational functions is bounded by a constant independent of $m$. Our proof of Theorem \ref{main:theorem} builds on these results combined with fundamental properties of symmetric polynomials and rational functions. 

	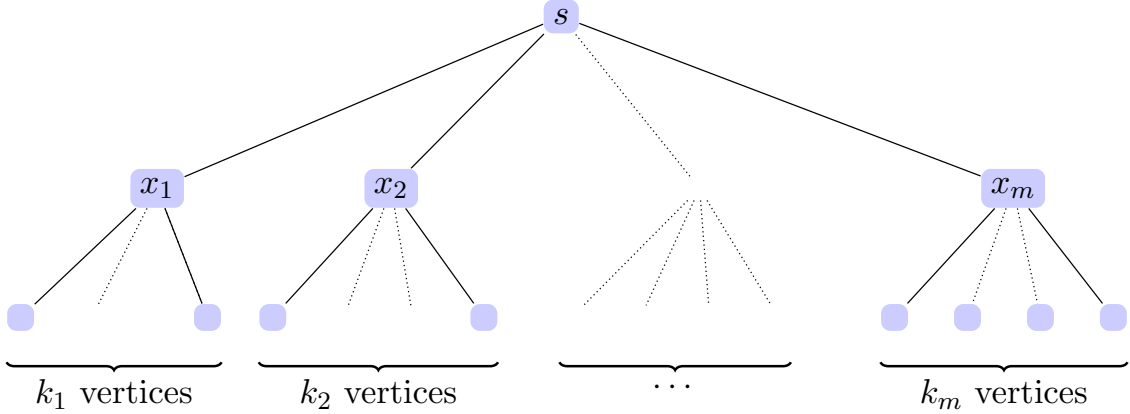
\begin{figure}[h]
		\centering
		\resizebox{\columnwidth}{!}{
\begin{tikzpicture}
	\begin{scope}[xshift=4cm]
	\node[main node] (1) {$s$};
	\node[main node] (2) [below left = 1.5cm and 4cm  of 1] {$x_1$};
	\node[main node] (3) [right = 2cm  of 2] {$x_2$};
	\node[main node] (4) [below left = 1.5cm  of 2] {};
 \node[] (9) [right = 0.5cm  of 4] {};
 \node[main node] (11) [right = 1cm  of 9] {};
 
 \node[main node] (5) [right = 2.5cm  of 4] {};
 \node[] (6) [right = 0.5cm  of 5] {};
 \node[] (7) [right = 0.5cm  of 6] {};
 \node[main node] (8) [right = 0.5cm  of 7] {};
    \draw[-] (1) edge node[right] {} (3);
	\draw[-] (2) edge node[right] {} (1);
	\draw[-] (2) edge node[right] {} (4);
    \draw[-] (3) edge node[right] {} (5);
    \draw[densely dotted] (3) edge node[right] {} (6);
    \draw[densely dotted] (3) edge node[right] {} (7);
    \draw[-] (3) edge node[right] {} (8);
    \draw[-] (2) edge node[right] {} (11);
    \draw[densely dotted] (2) edge node[right] {} (11);
    \draw[densely dotted] (2) edge node[right] {} (9);



 \node[] (10) [right = 3cm  of 3] {};
	
 \node[] (12) [right= 1.5cm  of 7] {};
 \node[] (13) [right = 0.5cm  of 12] {};
 \node[] (14) [right = 0.5cm  of 13] {};
 \node[] (15) [right = 0.5cm  of 14] {};

 \node[main node] (16) [right = 3cm  of 10] {$x_m$};
 
\node[main node] (17) [right = 1cm  of 15] {};
 \node[main node] (18) [right = 0.5cm  of 17] {};
 \node[main node] (19) [right = 0.5cm  of 18] {};
 \node[main node] (20) [right = 0.5cm  of 19] {};

\draw[densely dotted] (10) edge node[right] {} (1);
\draw[densely dotted] (10) edge node[right] {} (12);
\draw[densely dotted] (10) edge node[right] {} (13);
\draw[densely dotted] (10) edge node[right] {} (14);
\draw[densely dotted] (10) edge node[right] {} (15);

\draw[-] (1) edge node[right] {} (16);
\draw[-] (16) edge node[right] {} (17);
\draw[densely dotted] (16) edge node[right] {} (18);
\draw[densely dotted] (16) edge node[right] {} (19);
\draw[-] (16) edge node[right] {} (20);




	 \draw [
	thick,
	decoration={
		brace,
		mirror,
		raise=0.5cm
	},
	decorate
	] (4.west) -- (11.east) 
	node [pos=0.5,anchor=north,yshift=-0.55cm] {$k_1$ vertices};

	\draw [
	thick,
	decoration={
		brace,
		mirror,
		raise=0.5cm
	},
	decorate
	] (5.west) -- (8.east) 
	node [pos=0.5,anchor=north,yshift=-0.55cm] {$k_2$ vertices};

\draw [
	thick,
	decoration={
		brace,
		mirror,
		raise=0.5cm
	},
	decorate
	] (12.west) -- (15.east) 
	node [pos=0.5,anchor=north,yshift=-0.55cm] {$\cdots$};

 \draw [
	thick,
	decoration={
		brace,
		mirror,
		raise=0.5cm
	},
	decorate
	] (17.west) -- (20.east) 
	node [pos=0.5,anchor=north,yshift=-0.55cm] {$k_m$ vertices}; 


	\end{scope}
	
	\end{tikzpicture}
 }
   
		\caption{  $T[k_1, \cdots, k_m]$}
		\label{figpolytreecounter}
	\end{figure}

	\begin{lem}\label{lemma1}
		Let $p $ be a positive integer and let $S \subset \mathbb{N}^{p}$ be a finite subset of integral vectors of the nonnegative orthant, such that $ S  $ contains a non zero vector. Then there exist $ x^{*} \in S$ and $u \in \mathbb{N}^{p}$ such that
		
		i) if $\lvert S \rvert = 1$ then $\langle x^{*}, u \rangle > 0$
		
		ii) if $\lvert S \rvert \geq 2$ then for any $x \in S -\{x^{*}\}$, $\langle x^{*}, u \rangle > \langle x, u\rangle$.
	\end{lem}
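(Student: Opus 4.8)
The statement asserts that a finite set $S$ of distinct lattice points in the nonnegative orthant admits a linear functional with nonnegative integer coefficients that is maximized at a \emph{unique} point of $S$ (and with strictly positive value in the degenerate case $\lvert S\rvert = 1$). My plan is to produce such a functional explicitly by a mixed-radix (``base-$B$'') construction, which automatically yields nonnegative integer coefficients and sidesteps any genericity/perturbation argument.

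The case $\lvert S\rvert = 1$ is immediate: here $S = \{x^{*}\}$ with $x^{*} \neq 0$ by hypothesis, so $x^{*}$ has a coordinate $x^{*}_i \geq 1$, and $u := e_i$ gives $\langle x^{*}, u\rangle = x^{*}_i > 0$. For the general case (which in fact also subsumes $\lvert S\rvert = 1$), I would set $B := 1 + \max\{\, x_i : x \in S,\ 1 \leq i \leq p \,\}$, well defined since $S$ is finite and nonempty, with $B \geq 2$ because $S$ contains a nonzero vector. Define $u := (B^{p-1}, B^{p-2}, \dots, B, 1) \in \mathbb{N}^{p}$ and $\phi(x) := \langle x, u\rangle = \sum_{i=1}^{p} x_i B^{p-i}$. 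The key step is to show $\phi$ is injective on $S$: since every coordinate of every $x \in S$ lies in $\{0, 1, \dots, B-1\}$, this is exactly uniqueness of base-$B$ representation. I would argue as usual: if $\phi(x) = \phi(y)$ with $x \neq y$, pick the smallest index $j$ with $x_j \neq y_j$; then $\lvert x_j - y_j\rvert B^{p-j} = \big\lvert \sum_{i>j}(y_i - x_i)B^{p-i}\big\rvert \leq (B-1)\sum_{i>j} B^{p-i} = B^{p-j} - 1 < B^{p-j} \leq \lvert x_j - y_j\rvert B^{p-j}$, a contradiction. Taking $x^{*}$ to be the now unique maximizer of $\phi$ over $S$ yields conclusion (ii); conclusion (i) also follows, since $u$ has strictly positive entries and $S$ contains a nonzero coordinatewise-nonnegative vector $z$, so $\langle x^{*}, u\rangle \geq \langle z, u\rangle > 0$.

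I do not expect a serious obstacle: the lemma is elementary. The only points needing a little care are (a) ensuring the separating functional has \emph{nonnegative integer} entries, which is why I construct $u$ explicitly rather than taking a generic real direction and rounding, and (b) the edge cases $p = 1$ and $0 \in S$, both accommodated by the choice $B \geq 2$. An alternative route, worth recording as a remark, is a genericity argument: the set of $u \in \mathbb{R}^{p}$ that fail to separate some pair $x \neq y$ in $S$ is a finite union of hyperplanes $\{u : \langle x - y, u\rangle = 0\}$, whose complement meets the open positive orthant; choosing a rational point there and clearing denominators gives a suitable $u \in \mathbb{N}^{p}$. I would nonetheless present the constructive version, as it is self-contained and gives an explicit bound on $u$ in terms of the entries of $S$.
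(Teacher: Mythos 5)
Your proof is correct, but it takes a genuinely different route from the paper's. The paper picks $x^{*}$ to be a vector of maximal Euclidean norm in $S$ and then sets $u := x^{*}$ itself, verifying $\langle x^{*}, x^{*}-x\rangle>0$ for every other $x\in S$ by a short Cauchy--Schwarz case analysis (non-colinear versus colinear $x$); the separating functional is thus an element of $S$, with entries no larger than the largest coordinate appearing in $S$. You instead build a mixed-radix functional $u=(B^{p-1},\dots,B,1)$ with $B$ exceeding every coordinate of every point of $S$, prove $\langle\cdot,u\rangle$ is injective on $S$ via uniqueness of base-$B$ expansions, and take $x^{*}$ to be the unique maximizer. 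Your construction proves something strictly stronger than the lemma asks for (the functional separates \emph{all} pairs of points of $S$ simultaneously, not merely the chosen $x^{*}$ from the rest), and it is fully constructive with an explicit bound on $u$; the price is that the entries of $u$ grow exponentially in $p$, whereas the paper's $u$ stays small. Both choices produce $u\in\mathbb{N}^{p}$, which is the property actually needed downstream (Lemma \ref{lemma2} combines two such vectors by addition, exploiting nonnegativity), so either proof slots into the rest of the argument. Your handling of the edge cases ($\lvert S\rvert=1$, $0\in S$, $p=1$) is also sound, and the inequality chain $\lvert x_j-y_j\rvert B^{p-j}\le B^{p-j}-1<B^{p-j}\le\lvert x_j-y_j\rvert B^{p-j}$ checks out, including the degenerate case $j=p$ where the tail sum is empty.
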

	\begin{proof}
		If $\lvert S \rvert = 1$ the existence of $x^{*}$ and $u$ such that i) holds is clear as $S$ is the singleton of a vector that is non zero. 
		To deal with ii) in the case $\lvert S \rvert \geq 2$, consider one vector $x^{*}$ maximizing the Euclidean norm over $S$, i.e. $\lVert x^{*}\rVert^{2} = \mathsf{max}_{x\in S}\lVert x\rVert^{2} $ then let $x\in S - \{x^{*}\} $. Such $x^{*}$ and $x$ exist because $S$ is finite and $\lvert S \rvert \geq 2$.
		
		- Case 1: $x $ is not colinear to $x^{*}$. It follows from the Cauchy-Schwarz inequality, that $ \langle x^{*}, x\rangle < \lVert x^{*} \rVert \lVert x \rVert  $. Hence 
		$$    \langle x^{*}, x^{*}- x \rangle  = \lVert x^{*}\rVert^{2} - \langle x^{*}, x\rangle >   \lVert x^{*}\rVert( \lVert x^{*}\rVert - \lVert x\rVert)  > 0
		$$
		
		- Case 2: $x  \in S -\{x^{*}\}$ is colinear to $x^{*}$, i.e. $x=\lambda x^{*}$ with $\lambda \in  \mathbb{R}$. Since $x^{*}$ is maximizing the $2$-norm on S, then $0 \leq \lambda < 1$. Then 
		$$ \langle x^{*}, x^{*}- x \rangle = \lVert x^{*} \rVert ( 1 - \lambda)  > 0 $$
		
		In both cases, $\langle x^{*}, x^{*} -x \rangle > 0$. Hence, we can set $u:= x^{*} \in S \subset \mathbb{N}^{p}$, and the Lemma is proved.
	\end{proof}
	The following Lemma simply states that the vector $u \in \mathbb{N}^{p}$ can be chosen the same if one is given a pair of sets $S $ and $S'$.
	
	\begin{lem}\label{lemma2} Let $p$ be a positive integer, and let $S, S'$  be two finite subsets of $\mathbb{N}^{p}$, such that $\lvert S \rvert \geq 2$ and $\lvert S' \rvert \geq 2 $. Then there exist $u \in \mathbb{N}^{p}$, $x^{*} \in S$ and $y^{*} \in S$ such that

		for any $x \in S - \{ x^{*}\} $ and any $y \in S'-\{y^{*}\}$, $\langle x^{*}, u\rangle > \langle x, u\rangle $ and $\langle y^{*}, u\rangle > \langle y, u\rangle$.
	\end{lem}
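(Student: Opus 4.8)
The plan is to reduce this to Lemma \ref{lemma1} by a perturbation-and-projection argument. The naive approach — apply Lemma \ref{lemma1} separately to $S$ and to $S'$ to get $u_S$ and $u_{S'}$, then hope to combine them — fails because a direction that separates a maximizer in $S$ need not do anything sensible for $S'$. Instead, I would work with the set of all pairwise differences. Concretely, for each ordered pair $(x,x')$ of distinct elements of $S$, form the vector $x - x' \in \mathbb{Z}^p$, and likewise for $S'$; let $D$ be the (finite) union of all these difference vectors, together with the vectors $x' - x$. A vector $u$ has the property demanded by the lemma precisely when there is an element $x^\ast \in S$ with $\langle x^\ast - x, u\rangle > 0$ for all $x \in S \setminus \{x^\ast\}$, and similarly a $y^\ast \in S'$; equivalently, $u$ must not be orthogonal to any nonzero element of $D$ \emph{and} must induce a strict maximizer on each set. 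The first condition is the key: the set of $u$ orthogonal to a given nonzero $d \in D$ is a hyperplane, so the union over the finitely many $d \in D$ is a finite union of hyperplanes, which cannot cover $\mathbb{R}^p$.

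So the first step is to pick any $u_0 \in \mathbb{R}^p$ avoiding all these hyperplanes, i.e. with $\langle d, u_0\rangle \neq 0$ for every nonzero $d \in D$. Then the linear functional $x \mapsto \langle x, u_0\rangle$ is injective on $S$ and on $S'$, hence attains a \emph{unique} maximum $x^\ast$ on $S$ and a unique maximum $y^\ast$ on $S'$; for these, $\langle x^\ast, u_0\rangle > \langle x, u_0\rangle$ for all other $x \in S$, and symmetrically for $y^\ast$. The second step is to clear denominators and fix signs: replacing $u_0$ by a suitable rational approximation preserves all the finitely many strict inequalities (strictness is an open condition), then multiplying by a large positive integer gives an integral vector $u_1 \in \mathbb{Z}^p$ with the same inequalities. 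The third step is to make $u_1$ nonnegative: shift it by a large multiple of the all-ones vector $\mathbf{1}$, i.e. set $u := u_1 + c\mathbf{1}$ for $c \in \mathbb{N}$ large enough that all coordinates are nonnegative. This shift does not affect any difference inner product, since $\langle x^\ast - x, u\rangle = \langle x^\ast - x, u_1\rangle + c\langle x^\ast - x, \mathbf{1}\rangle$ — wait, this is nonzero in general, so I would instead note that it is cleaner to choose $u_0$ itself in the open positive orthant from the start: the positive orthant is a nonempty open set, the union of the finitely many hyperplanes $\{u : \langle d, u\rangle = 0\}$ is closed with empty interior, so their complement meets the open positive orthant, and any rational point there scaled to an integer point stays strictly positive. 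That avoids the shift entirely.

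I expect the main obstacle to be purely expository rather than mathematical: being careful that the maximizer selected for $S$ and the maximizer selected for $S'$ are simultaneously induced by \emph{one} vector $u$, which is exactly what the hyperplane-avoidance argument buys us — a single generic $u$ is injective on both sets at once because $D$ bundles the differences from both $S$ and $S'$. A secondary point to handle cleanly is the passage from a real $u$ to an integral one: one should phrase it as "the conditions $\langle x^\ast - x, u\rangle > 0$ (finitely many) and $u > 0$ (componentwise) define a nonempty open convex cone, which therefore contains a rational point, which after scaling by a common positive denominator lies in $\mathbb{N}^p$." No delicate estimates are needed anywhere.
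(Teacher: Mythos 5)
Your argument is correct, but it is genuinely different from the paper's. The paper applies Lemma~\ref{lemma1} to $S$ and to $S'$ separately, obtaining directions $u'$ and $u''$ with strict maximizers $x^{*}$ and $y^{*}$, and then sets $u:=u'+u''$, using nonnegativity of coordinates to argue the strict inequalities survive. You instead pick one generic direction up front: bundle all pairwise differences within $S$ and within $S'$ into a finite set $D$, choose $u_0$ in the open positive orthant off the finitely many hyperplanes $\{u:\langle d,u\rangle=0\}$, let $x^{*}$ and $y^{*}$ be the then-unique maximizers of $\langle\cdot,u_0\rangle$ on each set, and rationalize and scale to land in $\mathbb{N}^{p}$ (your mid-stream correction --- choosing $u_0$ positive from the start rather than shifting by a multiple of $\mathbf{1}$ afterwards --- is the right fix, since the shift does not preserve difference inner products). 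Your route is slightly heavier but more robust, and in fact it sidesteps a real weakness in the paper's version: the paper's chain of inequalities only establishes $\langle x^{*},u\rangle>\langle x,u'\rangle$, not the required $\langle x^{*},u\rangle>\langle x,u\rangle$, because $\langle x,u''\rangle\ge 0$ inflates the right-hand side too. Concretely, with $S=\{(3,0),(0,2)\}$ and $S'=\{(0,10),(1,0)\}$ one gets $u'=(3,0)$, $u''=(0,10)$, $u=(3,10)$, and then $(0,2)$ beats $x^{*}=(3,0)$ on $S$ under $u$; the maximizers must be re-selected \emph{after} the common direction is fixed, which is exactly what your hyperplane-avoidance argument does automatically. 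What the paper's approach buys is brevity and a purely combinatorial flavour (no appeal to density of rationals); what yours buys is that a single generic $u$ handles both sets at once with no case analysis on $\lvert S\rvert$ or $\lvert S'\rvert$, and no delicate interaction between the two separating directions.
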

	\begin{proof}
		Lemma \ref{lemma1} insures that there exists $u'\in \N^{p}$, $u'' \in \N^{p}$, $x^{*} \in S$ and $y^{*} \in S'$ such that for any $x \in S-\{x^{*}\}$ and any $y\in S' -\{y^{*}\}$, $\langle x^{*}, u' \rangle > \langle x, u' \rangle $ and $\langle y^{*}, u'' \rangle > \langle y, u'' \rangle$. Simply set $u:=u'+u''$ to obtain $\langle x^{*}, u \rangle  = \langle x^{*}, u' \rangle   + \langle x^{*}, u'' \rangle \geq \langle x^{*}, u' \rangle > \langle x, u' \rangle $ for every $x\in S-\{ x^{*}\}$ (the second equality follows from $\langle x^{*}, u'' \rangle \geq 0$ as the coordinates of both vectors are non-negative). Similarly, $\langle y^{*}, u \rangle > \langle y, u \rangle$ for every $y\in S'-\{y^{*}\}$.
	\end{proof}

	\begin{lem}\label{lemma:polyfunction}
		Let $\xi^{t}(T[k_1, \ldots, k_m],s) \in \R^{d}$ be the embedding of the tree displayed in Figure \ref{figpolytreecounter} obtained via a GNN with rational activations after $t$ iterations, where $\xi^{0}(v)=1$ for all vertices $v \in V(T[k_1, \ldots, k_m])$. 
		Then, for any iteration $t$, and for every coordinate $\xi^{t}_i(T[k_1, \ldots, k_m],s)$, there exists a symmetric rational function $F_i$ such that $\xi^{t}_i(T[k_1,\ldots,k_m],s)=F_i(k_1, \cdots, k_m)$.  
		Furthermore, the degrees of the numerator and denominator of each $F_i$ do not depend on $m$, but only on  the underlying neural network and  $t$.
	\end{lem}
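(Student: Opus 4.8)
The plan is to exploit the very rigid shape of $T[k_1,\dots,k_m]$: at every level there are only three kinds of vertices, and one can describe the whole GNN run by a small coupled recursion. First I would show, by induction on $t$ (or by invoking the refinement bound of Theorem~\ref{refinementcoloringth}), that at each iteration all $k_j$ leaves hanging off a fixed $x_j$ carry one common embedding $c_j^t\in\R^d$, and that $c_j^t$ and $b_j^t:=\xi^t(x_j)$ depend on the tree only through $k_j$ and the sequence $a^0,\dots,a^{t-1}$ of previously computed root embeddings $a^i:=\xi^i(T[k_1,\dots,k_m],s)$. With $\agg=\mathsf{sum}$ the update rule then collapses to the finite system
\begin{align*}
c_j^{t+1} &= \comb_t\bigl(c_j^t,\,b_j^t\bigr),\\
b_j^{t+1} &= \comb_t\bigl(b_j^t,\,a^t + k_j\,c_j^t\bigr),\\
a^{t+1} &= \comb_t\Bigl(a^t,\,\textstyle\sum_{j=1}^{m} b_j^t\Bigr),
\end{align*}
with $a^0=b_j^0=c_j^0$ the all-ones vector.

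Second, I would establish rationality. Each $\comb_t$ is a feedforward network with a rational activation having no real pole, hence it represents a rational function $\R^{2d}\to\R^d$ that is defined on all of $\R^{2d}$, and the degrees of its numerator and denominator are bounded by a constant fixed by the widths, depths and activation degrees of that network, not by $m$. An easy induction on $t$ using the system above then shows that for each $j$ the vectors $b_j^t$ and $c_j^t$ are rational functions of the single variable $k_j$ (compositions of the fixed maps $\comb_0,\dots,\comb_{t-1}$ with affine maps), and that $a^t$ is a rational function of $k_1,\dots,k_m$. Symmetry of $a^t$ is immediate: permuting the $k_j$ gives an isomorphic rooted tree (with root $s$ fixed), and GNN embeddings are isomorphism-invariant, so each coordinate $F_i$ of $a^t$ is a symmetric rational function.

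Third, and this is the delicate step, I would bound the degrees. The only place where aggregation could inflate the degree is the term $\sum_{j=1}^m b_j^t$: placing $m$ fractions over a common denominator could a priori yield a denominator of degree $\Theta(m)$. The key observation is that the denominators occurring here stay in a controlled form — essentially a product $\prod_{j=1}^m Q_t(k_j)$ of a single fixed univariate polynomial $Q_t$ (of degree depending only on $t$ and the network), possibly multiplied by a symmetric factor common to all branches and inherited from $a^0,\dots,a^{t-1}$. Such a denominator has degree in each individual variable $k_j$ bounded independently of $m$, and likewise for the numerator; pushing this through the bounded-degree rational map $\comb_t$ multiplies these per-variable degrees by a bounded factor only. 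Hence, by induction, each $F_i$ admits a representation whose numerator and denominator have degree bounded purely in terms of $t$ and the network. I expect the hard part to be exactly this step: verifying that the product/common-factor structure of the denominators is genuinely preserved through every update — i.e.\ that reducing intermediate rational functions to lowest terms does not secretly reintroduce a dependence on $m$ — which calls for a careful but elementary tracking of how the denominators evolve across iterations.
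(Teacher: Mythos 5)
Your setup (the three-variable recursion for $c_j^t$, $b_j^t$, $a^t$), your rationality argument, and your symmetry argument all match the paper's proof in substance; the paper derives symmetry by routing through color refinement and Theorem~\ref{refinementcoloringth} rather than by direct isomorphism-invariance, but the two are interchangeable here. Your explicit collapse of the GNN run to a finite coupled system is in fact cleaner than the paper's generic induction over all vertices.

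The problem is your third step, and you have correctly located it as the delicate one: it is exactly the point where the sum $\sum_{j=1}^m b_j^t$ threatens to blow up the degree. But the resolution you sketch does not prove the stated conclusion. Tracking the product structure $\prod_{j=1}^m Q_t(k_j)$ of the denominators gives you a bound on the degree \emph{in each individual variable} $k_j$; the lemma asserts that the degrees of the numerator and denominator of $F_i$ are bounded independently of $m$, and this is how the lemma is consumed in the proof of Theorem~\ref{main:theorem}, where one needs the total degree $q$ to satisfy $q<m$ so that a top monomial omits some variable $k_m$. A per-variable bound does not yield this: $\prod_{j=1}^m k_j$ has per-variable degree $1$ and total degree $m$. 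Your final sentence ("Hence \dots each $F_i$ admits a representation whose numerator and denominator have degree bounded purely in terms of $t$ and the network") is therefore a non sequitur. Worse, no amount of careful bookkeeping of the product/common-factor structure can close this gap, because the total degree genuinely grows: with activation $\sigma(x)=1/(1+x^2)$ and $\comb_t(x,y)=\sigma(y)$, one gets $\sum_j 1/(1+(1+k_j)^2)$ inside the second iteration, whose reduced denominator is $\prod_j\bigl(1+(1+k_j)^2\bigr)$, of total degree $2m$, with no cancellation against the numerator. So either you restrict to polynomial-denominator-free (i.e.\ polynomial) activations for this step, or you must reformulate what "bounded degree" means (e.g.\ in terms of per-variable degrees or of the structure of the exponent set $S$) and check that the downstream argument of Theorem~\ref{main:theorem} survives the reformulation. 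For what it is worth, the paper's own proof asserts $\mathsf{deg}(S_m)\leq(q_t,r_t)$ for the aggregated term directly from the induction hypothesis on the individual summands, without addressing the common-denominator issue at all, so you have identified a weak point rather than introduced one; but as written your argument does not repair it.
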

	\begin{proof}
		For clarity, we will perform three separate inductions, one for the existence of the rational function, one for the symmetry (using as a surrogate the color refinement algorithm), and the third one for the degree boundedness. 
		
		\medskip
		
		\textbf{Rational function.} We first prove by induction on $t$ that, for any vertex $v \in V(T[k_1, \allowbreak \cdots, k_m])$, all the coordinates of $\xi^{t}(T[k_1, \ldots,\allowbreak k_m],v)$ are  rational functions of the $k_i$'s.
		
		Base case: for $t=0$ this is trivial since all vertices are initialised with the constant rational function 1, whose degree does not depend on $m$.
		
		Induction step: Suppose the property is true at iteration $t$, i.e for each node $w$, $\xi^{t}(T[k_1, \ldots, k_m],w)$ is (coordinate-wise) a rational functions of the $k_i$'s. Since
		\begin{align*}
			\xi^{t+1}(T[k_1, \ldots, k_m],v) =
			\comb_t(\xi^{t}(T[k_1, &\ldots, k_m],v), \\ & \agg_t ( \{ \{ \xi^{t}(T[k_1, \cdots, k_m] ,w) : w \in N(v) \} \} ))
		\end{align*}
		where $\comb_t$ is a neural network with rational activations, hence a rational function. Also, $\agg_t$ is supposed rational in the entries of its multiset argument. Then 
		by composition, each coordinate of $\xi^{t+1}(T[k_1, \allowbreak \ldots, k_m],v)$ is a rational function of $k_1, \cdots, k_m$. 
		
		\medskip
		
		
		\textbf{Symmetry.} need the intermediary claim:
		
		\emph{Claim: Let $T[k_1, \cdots, k_m]$ be two rooted trees given by Figure \ref{figpolytreecounter}, and let $s$ be its source vertex. Then for any iteration $t$, the color refinement embedding $\mathsf{cr}^{t}(s)$ is invariant by permutation of the $k_i$'s.} 
		\begin{proof}[Proof of claim] By induction of $t \geq 0$, we prove the following claim: for any integer $t \geq 0$,  $ \mathsf{cr}^{t}(s)$ and the multiset $\{ \{ \mathsf{cr}^{t}(x_1),   \allowbreak \cdots, \mathsf{cr}^{t}(x_m) \} \}$  are invariant by permutation of the $k_i$'s.
			
			\emph{Base case:} $(t=0)$ is obvious since $\mathsf{cr}^{0}(v)=1$  for any vertex $v \in T[k_1, \cdots, k_m]$.
			
			\emph{Induction step:} Suppose that for some iteration $t$, $\mathsf{cr}^{t}(s)$ and $\{ \{ \mathsf{cr}^{t}{x_1}, \cdots, \mathsf{cr}^{t}(x_m) \} \}$ are invariant by permutation of the $k_i$'s. $\mathsf{cr}^{t+1}(s)$ being invariant by permutation of the $k_i$'s is a consequence of:
			$\mathsf{cr}^{t+1}(s) = ( \mathsf{cr}^{t}(s), \{\{ \mathsf{cr}^{t}(x_1), \cdots,  \mathsf{cr}^{t}(x_m) \}\} ) $
		\end{proof}
		We know from Theorem \ref{refinementcoloringth} that the color refinement algorithm refines any GNN at any iteration. Since the tuple obtained by color refinement for the vertex $s$ is invariant with respect to permutations of the $k_i$'s, $\xi^{t}(T[k_1, \ldots, k_m],s)$ is also invariant with respect to permutations of the $k_i$'s.

		
		
		\medskip
		
		\textbf{Degree boundedness.} Finally, the degree boundedness also follows by induction. We will prove that the degree of the numerators and denominators of $\xi^{t}(T[k_1, \cdots, k_m],s)$ are both respectively bounded by $q_t$ and $r_t$. 
		
		\emph{Base case:} At the first iteration ($t=0$), $P_m$ is constant equal to $1$ ($q_1=1$ and $r_1=0$) for any $m$.

		\emph{Induction step:} Suppose that for any iteration $t \leq T$ , there exists $q_t \in \mathbb{N}$ (that does not depend on $m$ nor the vertex $v \in V(T[k_1, \cdots, k_m])$), such that for every positive integer $m$, every vertex $v \in T[k_1, \cdots, k_m]$, and for every iteration $t$, $\mathsf{deg}(F_i) \leq (q_t,r_t)$. Then, using again the update rule:
		\begin{align*}
			\underbrace{\xi^{t+1}(T[k_1, \cdots, k_m],v)}_{Q_m}  =
			&\comb_t(\underbrace{\xi^{t}(T[k_1, \cdots, k_m], v)}_{R_m}, \\ &\qquad \qquad \underbrace{\agg_t ( \{ \{ \xi^{t}(T[k_1, \cdots, k_m] ,w) : w \in N(v) \} \} )}_{S_m} )
		\end{align*}
		$R_m$ and $S_m$ are rational fractions. By the induction hypothesis,  for any $m$, 
		$\mathsf{deg}(R_m) \leq (q_t, r_t)$ and $\mathsf{deg}(S_m) \leq (q_t, r_t)$.  
		
		Each coordinate of the function  $\comb_t$ is a rational fraction of degree independent of $m$ (neural network with a rational activation). Let $(a_i,b_i)$ be the degree of its $i$-th coordinate for $i \in [d]$. Hence, the degree of the $i$-th coordinate $Q_m$ is at most $(a_i \times q_t r_t,  b_i \times q_t  r_t) $. Hence the property remains true at $t+1$ for each coordinate $i$, setting $q_{t+1} := ( \max_{i\in [d]} a_i ) \times q_t r_t$ and $r_{t+1} := ( \max_{i\in [d]} b_i )  \times q_t r_t$.
	\end{proof}

	We can now build towards the:
	
	\begin{proof}[Proof of Theorem \ref{main:theorem}]
		Recall that the fractions we consider have no real pole. Consider the following query of GC2:
		$$ Q(s) = \lnot \left(\exists^{\geq 1} x ( E(s,x) \land \exists^{\leq 1} s E(x,s) ) \right)  = \forall x E(s,x) \exists^{\geq 2}s E(x,s)
		$$
		$Q$ is true if and only if all the neigbhors of the node $s$ have degree at least $2$. Namely $Q(T[0, k_2, \cdots, k_m],s) $ is false and $Q(T[k_1, \cdots, k_m],s)$ is true for every positive integers $k_1, \cdots, k_m$. We will prove by contradiction that any bounded GNN  with rational activations cannot uniformly express the query $Q$. 
		Let $R_m:=\xi^{t}(T[k_1, \cdots, k_m], s)$ be the embedding of the source node of $T[k_1, \cdots, k_m]$ returned by a GNN with rational activations,  after a fixed number of iterations $t $. Suppose that it can uniformly express the query $Q$, then;
		$$
		\begin{cases}
		R_m(k_1, \cdots, k_m) \geq 1-\epsilon &  \text{if $s\in Q(T[k_1, \cdots, k_m],s)$} \\
		R_m(k_1, \cdots, k_m) \leq \epsilon  & \text{if $s \notin Q(T[k_1, \cdots, k_m],s)$}
		\end{cases}
		$$
		Let $\tilde{R}_m := R_m - \frac{1}{2}$ and $\epsilon' := \frac{1}{2} - \epsilon$. Interpreting the query $Q$ over $T[k_1, \cdots, k_m]$ implies the following constraints on the sequence of rational functions $\tilde{R}_m$:
		\begin{equation}\label{expressivityCondition}
		\exists \epsilon' > 0 \; \text{such that} \;\forall k \in \mathbb{N}^{m} \; \begin{cases}
		\exists i \in \{1, \cdots, m\}, k_i = 0 \implies \tilde{R}_m(k) \leq -\epsilon' & \\
		\forall i \in \{1, \cdots, m\}, k_i >  0 \implies \tilde{R}_m(k) \geq \epsilon' & 
		\end{cases}
		\end{equation}

		Let $\tilde{R}_m = \frac{\tilde{P}_m}{\tilde{Q}_m}$ be the irreducible representation of $\tilde{R}_m$,and let $S$ (resp. $S'$) be the set of exponents of the monomials of $\tilde{P}_m$ (resp. $\tilde{Q}_m$), i.e.
		\begin{align*}
			S  := \{ (\alpha_1, \cdots, \alpha_m) \in \mathbb{N}^{m}:  \alpha_1 + \cdots + \alpha_m  \leq q & \text{ and $k_1^{\alpha_1} \cdots k_m^{\alpha_m}  $ } \\ & \text{is a  monomial of $\tilde{P}_m$} \} 
		\end{align*}
		\begin{align*}
			S'  := \{ (\alpha_1, \cdots, \alpha_m) \in \mathbb{N}^{m}:  \alpha_1 + \cdots + \alpha_m  \leq q & \text{ and $k_1^{\alpha_1} \cdots k_m^{\alpha_m}  $ } \\ & \text{is a  monomial of $\tilde{Q}_m$} \} 
		\end{align*}
		First $\lvert S\rvert =0$ is impossible as $\tilde{R}_m$ would have no zeroes and Conditions \ref{expressivityCondition} cannot be met.  Henceforth we suppose that $ \lvert S \rvert \geq 2 $ and $\lvert S' \rvert \geq 2$ (the other cases are discussed hereafter). 

		Lemma \ref{lemma2} (with $p=m$) tells us there exists $\alpha^{*} \in S$, $\beta^{*} \in S'$ and $u =(u_1, \cdots, u_m)\in \mathbb{N}^{m}$ such that for any $ \alpha \in S-\{\alpha^{*}\}$ and for any $\beta \in S'-\{\alpha^{*}\}$ $$\langle \alpha^{*}, u \rangle > \langle \alpha,  u \rangle \qquad \text{and} \qquad \langle \beta^{*}, u \rangle > \langle \beta,  u \rangle $$
		
		\emph{\textbf{Claim 1:} The  (univariate) monomial $t^{\langle \alpha^{*}, u\rangle}$ is the monomial of $\tilde{P}_m(t^{u_1}, \cdots, t^{u_m})$ \emph{of largest degree.}} 
		\begin{proof}$$\tilde{P}_m = \sum_{\alpha \in S} \gamma_{\alpha} k_1^{\alpha_1} \cdots  k_m^{\alpha_m}   \implies  
			\tilde{P}_m( t^{u_1}, \cdots, t^{u_{m-1}}, t^{u_{m}}) = \sum_{\alpha \in S} \gamma_{\alpha} t^{\langle \alpha, u\rangle} $$
			Hence, the monomial of largest degree of $\tilde{P}_m(t^{u_1}, \cdots, t^{u_{m}})$ is the one such that $\langle \alpha, u \rangle$  is (strictly) maximized when $ \alpha \in S$. By construction, it is $\alpha^{*}$. 
		\end{proof}

		Now, suppose that $\tilde{P}_m$ has bounded degree (i.e. there is a uniform bound on the degree of the polynomials $\tilde{P}_m$ that is independent of $m$). Then, any monomial of largest degree of $\tilde{P}_m$ does not contain all the variables. Without loss of generality (by symmetry of $\tilde{P}_m$), we can suppose that the monomial $k_1^{\alpha^{*}_1} \cdots k_m^{\alpha^{*}_m}$ does not contain $k_m$, i.e. $\alpha^{*}_m=0$. 
		
		\medskip
		
		\emph{\textbf{Claim 2:} In these conditions, the  (univariate) monomial $t^{\langle \alpha^{*}, u\rangle}$ is also \emph{the} monomial of \emph{of largest degree of $\tilde{P}_m(t^{u_1}, \allowbreak \cdots, t^{u_{m-1}}, 0)$. } }
		\begin{proof}
			Evaluating $\tilde{P}_m$ in $(t^{u_1}, \cdots, t^{u_{m-1}}, 0)$ removes the contribution of each monomial of $P_m$ containing the last variable, and keeps only the contribution of the monomials containing it: 
			\begin{align*}
				\tilde{P}_m = \sum_{\alpha \in S}  \gamma_{\alpha} k_1^{\alpha_1} \cdots k_m^{\alpha_m}   \implies    \tilde{P}_m( t^{u_1}, \cdots, t^{u_{m-1}}, 0) =  \sum_{\substack{ \alpha \in S \alpha=(\alpha_1, \cdots, \alpha_{m-1}, 0)}} \gamma_{\alpha} t^{\langle \alpha, u\rangle} 
			\end{align*}
			Therefore, the monomial of largest degree of $\tilde{P}_m(t^{u_1}, \cdots, \allowbreak t^{u_{m-1}},0)$ is the one such that $\langle \alpha, u \rangle$  is (strictly) maximized when $ \alpha \in S$ and $\alpha_m = 0$. By construction, such $\alpha$  is $\alpha^{*}$. 
		\end{proof}
		Similarly, we obtain: 
		
		\emph{\textbf{Claim 3:}  The  (univariate) monomial $t^{\langle \beta^{*}, u\rangle
			}$ is the monomial of $\tilde{Q}_m(t^{u_1}, \cdots, t^{u_m})$ \emph{of largest degree.}} 
		
		\emph{\textbf{Claim 4:} the  (univariate) monomial $t^{\langle \beta^{*}, u\rangle}$ is also \emph{the} monomial of \emph{of largest degree of $\tilde{Q}_m(t^{u_1}, \allowbreak \cdots, t^{u_{m-1}}, 0)$. }}

		Let $(\eta_\beta)_{\beta \in S'}$ be the coefficients of $\tilde{Q}_m$. The four claims combined imply that  $$\tilde{R}_m(t^{u_1}, \cdots, t^{u_{m-1}}, t^{u_m}) \underset{t \rightarrow +\infty}{\sim} \tilde{R}_m(t^{u_1}, \cdots, t^{u_{m-1}}, 0) \underset{t \rightarrow +\infty}{\sim} \frac{\gamma_{\alpha^{*}} t^{\langle \alpha^{*}, u\rangle}}{\eta_{\beta^{*}}t^{\langle \beta^{*}, u \rangle}}$$In particular $\lim_{t\rightarrow +\infty} \tilde{R}_m(t^{u_1}, \allowbreak \cdots, t^{u_{m}}) = \lim_{t\rightarrow +\infty} \tilde{R}_m(t^{u_1}, \allowbreak \cdots, t^{u_{m-1}},0)$. This is a contradiction with Conditions \ref{expressivityCondition}.
		
		Finally, if $\lvert S'\rvert = 0$ then $\tilde{R}_m$ is a polynomial. In this case we do not need Claims 3 and 4, and the reasoning still applies.  If $\lvert S \rvert = 1$  or $\lvert S' \rvert = 1$ then we can apply Lemma \ref{lemma1} (case i)) to the polynomials having only one monomial, and the same argument still goes through.
	\end{proof}
	
	\bigskip
	
	\begin{rem}\label{remark:other:queries}
		Note that the proof of Theorem $\ref{theorem1}$ can  be extended to a larger family of queries. Namely, for any integer $p \geq 2$, let 
		$$
		Q_p(s) := \lnot \left(\exists^{\geq 1} x ( E(s,x) \land \exists^{\leq (p-1)} s E(x,s) ) \right) = \forall x E(s,x) \exists^{\geq p}s E(x,s)$$
		$Q_p$ queries if vertex $s$ has neighbors whose degree are all at least $p$. Then any $(Q_{p})_{p \in \mathbb{N}}$ cannot be expressed by any GNN with rational activations. 
	\end{rem}
	
	\begin{rem}\label{remark:other:queries}
		The proof of Theorem \ref{main:theorem} was initially attempted using queries of the form: 
		$$ \tilde{Q}_p(s)  = \lnot \left(\exists^{\geq 1} x ( E(s,x) \land \exists^{\geq (p+1)} s E(x,s) ) \right)   = \forall x E(s,x) \exists^{\leq p}s E(x,s) $$
		Which expresses that all the neighbors of $s$ have degree at most $p$. Note the similarity between $Q_p$ from Remark \ref{remark:other:queries} and $\tilde{Q}_p$. Although $\tilde{Q}_p$ also seems a good candidate that cannot be expressed uniformly by a GNN with polynomial activations and aggregations, we could not conclude with the same approach as in the proof of Theorem \ref{main:theorem}, due to the following interesting fact:

		There exists $\epsilon > 0$ and a sequence of symmetric polynomial $(p_m)_{m \in \mathbb{N}} \in \mathbb{R}[x_1, \cdots, x_m]$ of bounded degree (i.e. there exists an integer $q$ such that for any $m$, $\mathsf{deg}(p_m) \leq q$) and for any $m, p_m$ is greater than $\epsilon$  on the vertices of the unit hypercube $\{0,1\}^{m}$, and less than $-\epsilon$ on all the other points of $\mathbb{N}^{m}$.
		$p_m= 1 - \sum_{i=1}^{m} x_i^{2} + \sum_{i=1}^{m} x_i^{4} $ is an example of such sequence of symmetric polynomials. 
		
	\end{rem}
	
	\section{Towards a rational fragment of GC2}\label{sec:rgc2}
	
	We now turn our attention towards the  fragment of GC2  expressed by rational GNNs. We present of fragment of GC2 that use of existential quantifiers aligned in the ``same direction'', at the exception of the very last quantifier, as negation will only be allowed for the last subformula. In particular, this removes logical conjunctions and negations inside nested subformulas. Informally, such limitation arises from the aggregation phase when the messages in the neighborhood of a node, one obtains a signal that can become unbounded. We  lose track of the number of neighbors that verify a given query, except at the very first iteration (captured by the set $\Omega_0$ in the Definition below). Our counterexamples (see Remark \ref{remark:other:queries}) confirm this is indeed happening. However, it is unclear if such limitation carries over for other aggregation functions such as max. However, our proof does not show that negation is \emph{never} allowed in subformulas. We conjecture it is not the case and that RGC2 is close to the logic expressed by rational GNNs. 
	


	\begin{definition}[Fragment RGC2 $\subseteq$ GC2]
		
		The fragment RGC2 is composed of logical queries of $\Omega$ constructed as follows:
		
		\begin{itemize}
			\item   $\Omega_0$ contains $\mathsf{Col}_{i} $, $\lnot \mathsf{Col}_{i}$, and $\exists^{\geq K}y E(x,y) Col_i(y) $ for some $i \in [\ell]$ and for some $K \in \N$. 
			
			\item $ \Omega+ := \{\mathcal{H}^{(m)}(\phi) : m \in \N, \phi \in \Omega_0)\}$, where $ \mathcal{H}: \phi \mapsto \tilde{\phi}$ extends queries to $1$-hop neighborhoods, i.e.  $\tilde{\phi} (x) := \exists^{\geq 1}y ( E(x,y) \land \phi(y) )$.
			
			
			
			\item $ \Omega := \{\lnot \psi $ with $\psi \in \Omega\} \cup \Omega$.  
			
		\end{itemize}
		
	\end{definition}
	
	A few comments about RGC2 queries are in order. Note that the counting quantifier $\exists^{\geq K} $ with $K > 1$ is not allowed on top of other guarded fragments, so

	$$  \phi_1(x):= \exists^{\geq 3}y E(x,y) (\exists^{\geq 1}x E(x,y) \land Red(x)) $$
	
	is not in RGC2, but $\phi_2$ is:
	$$  \phi_2(x) := \exists^{\geq 1}y E(x,y) (\exists^{\geq 1}x E(x,y) \land Red(x))$$

	

	Similarly, 
	$$ \phi_1(x) = \lnot (\exists^{\geq 1}y E(x,y)(  \exists^{\geq 1}x E(y,x) (\exists^{\geq 1}y E(x,y) \land  Red(x))))$$ belongs to RGC2,
	$$ \phi_2(x) = \lnot (\exists^{\geq 1}y E(x,y)(  \exists^{\geq 1}x E(y,x) (\exists^{\leq 1}y E(x,y) \land Red(x))))$$ does not (alternates between existential and non existential quantifiers).  Neither does
	$$ \phi_3(x) = \lnot (\exists^{\geq 1}y E(x,y)(  \lnot \exists^{\geq 1}x E(y,x) (\exists^{\geq 1}y E(x,y) \land Red(x))))$$ as it alternates between $\exists^{\geq 1}$ and $\lnot \exists^{\geq 1
	}$ in the nested formulas. 
	
	Finally, unlike the query $Q_1$ in Remark \ref{remark:other:queries}
	$$\phi_4(x):=\forall y E(x,y) ( \exists^{\leq 1} E(x,y))   = \lnot( \exists^{\geq 1}y E(x,y) ( \exists^{\geq 1} x E(x,y))$$ is in RGC2. To build towards Theorem \ref{thm:descriptionRGNN}, we first need the following Lemma.
	
	
	

	\begin{lem}\label{lemma:polycompute}
		Let $\sigma$ be a univariate polynomial such that $\mathsf{deg}(\sigma) > 1 $ .  Let $M > 1$ be an integer, and let $P \in \R[X]$ be a univariate polynomial of degree $M$. Then there exists a feedforward neural network $f$ with activation function $\sigma$, whose size depends only on $M$, such that $f = P$.
	\end{lem}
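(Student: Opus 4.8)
The plan is to show that a single polynomial activation $\sigma$ of degree $\geq 2$, together with affine pre- and post-compositions, generates all polynomials of any prescribed degree, using a bounded-size network. The key observation is that by shifting and scaling the input of $\sigma$ and taking linear combinations of several such shifts, one can isolate monomials of degrees up to $\deg(\sigma)$; iterating this (stacking layers) lets the degree grow multiplicatively, so $O(\log_{\deg\sigma} M)$ layers of bounded width suffice. First I would treat the one-layer building block: writing $\sigma(x) = \sum_{j=0}^{D} c_j x^j$ with $D = \deg(\sigma) \geq 2$ and $c_D \neq 0$, I claim the function $x \mapsto x^j$ for each $j \in \{0,\dots,D\}$ is computable by a $2$-layer network with activation $\sigma$ and width at most $D+1$. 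Indeed, the map $t \mapsto \sigma(tx)$, evaluated at $D+1$ distinct fixed values $t_0,\dots,t_D$, gives $D+1$ linear combinations of $1, x, x^2, \dots, x^D$ whose coefficient matrix is a (generalized) Vandermonde-type matrix $(c_j t_i^j)_{i,j}$; since the $c_j t_i^j$ with $c_j \neq 0$ can be arranged to form an invertible system on the nonzero-coefficient monomials, one can solve for any desired $x^j$ among those with $c_j \neq 0$, and handle missing low-degree monomials by also using $\sigma$ at a shifted argument $\sigma(tx + s)$ or simply by induction (lower powers are obtained first and subtracted off). In particular $x \mapsto x^D$ is exactly representable.

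Next I would do the induction on $M$. For the base case $M \leq D$, the one-layer block above already yields $P$ as a linear combination of $1, x, \dots, x^M$. For the inductive step, given $P$ of degree $M > D$, write $M = qD + r$ with $0 \leq r < D$ (or more simply use repeated squaring-style composition): the function $x \mapsto x^D$ is computed by one block, and then feeding its output $y = x^D$ into a network computing a polynomial of degree $\lceil M/D \rceil$ in $y$ — which exists by the induction hypothesis, with size depending only on $\lceil M/D\rceil$ — produces $x$-powers up to degree $D \cdot \lceil M/D\rceil \geq M$. Taking an appropriate affine combination of such composed blocks (and of the identity-ish low-degree block) realizes any target polynomial $P$ of degree $M$. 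Since each composition step multiplies the attainable degree by at least $D \geq 2$, after $k = \lceil \log_D M \rceil$ layers all monomials $1, x, \dots, x^M$ are available, and a final affine map reads off the coefficients of $P$; the total width and depth are bounded by functions of $M$ (and $D$, which is fixed with $\sigma$), so the network size depends only on $M$ as claimed.

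The main obstacle I anticipate is the bookkeeping around monomials whose coefficient $c_j$ in $\sigma$ vanishes: the naive Vandermonde argument only directly recovers $x^j$ for $j$ with $c_j \neq 0$, so one must argue that the missing powers (in particular all low powers if $\sigma$ has only a leading term, e.g. $\sigma(x) = x^D$) can still be generated. The cleanest fix is to note that using $\sigma$ at affinely shifted inputs $a x + b$ with several choices of $(a,b)$ produces, via the binomial expansion of $(ax+b)^D$, \emph{all} of $1, x, \dots, x^D$ as linear combinations (the relevant coefficient matrix is again invertible for generic shifts), so even the pure-monomial activation $\sigma(x)=x^D$ already yields every polynomial of degree $\leq D$ in one layer; the rest of the induction is then unaffected. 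A secondary, purely technical point is making the dependence of the width on $M$ explicit, but since at each of the $O(\log M)$ layers we only need a constant (in $M$) number, namely $O(D)$, of neurons to reconstruct the next batch of monomials, a crude bound such as width $O(D\log M)$ and depth $O(\log M)$ suffices, and I would not optimize it further.
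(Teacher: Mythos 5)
Your proposal is correct and follows essentially the same route as the paper's proof: both establish that a one-hidden-layer network with affinely shifted copies of $\sigma$ spans all polynomials of degree at most $\deg(\sigma)$ via a Vandermonde-type linear-independence argument, and then compose layers so that the attainable degree grows multiplicatively until it reaches $M$. If anything, you are more explicit than the paper about the degenerate case where some coefficients of $\sigma$ vanish (e.g.\ $\sigma(x)=x^{D}$), which the shifted-argument/binomial-expansion fix handles correctly.
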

	\begin{proof}
		Let $m := \mathsf{deg}(\sigma) > 1$. The set $\mathcal{A}_{\sigma}:=\mathsf{span}\{ \sigma_{w,\theta}: \R \rightarrow \R, \, \sigma_{w,\theta}(x):= \sigma(wx  + \theta): w \in \R, \theta \in \R  \}$ is the set of polynomials of degree at most $m$. This can be deduced, for example, from the fact that the family $\mathcal{F}$ of polynomials $1, \sigma(X), \sigma(X+1), \cdots, \sigma(X+m)$  are linearly independent. This is true as the polynomials $1, (X+1)^m, (X+2)^m, \cdots, (X+m)^m$ are linearly independent (a consequence of the via the positivity of the Vandermonde determinant for $m$ distinct integers). Hence, the family $\mathcal{F}$  form a basis of the vector space of polynomials of degree at most $m$.    Since $\mathcal{A}_{\sigma}$ is the set of functions computed by a $1$-hidden layer neural network with activation $\sigma$, we can perform an immediate induction on $M$ to prove the claim.
		
		\emph{Base case:} (M=2). Proved above.
		
		\emph{Induction step:} Suppose that for some integer $M$, for any integer $ 0 < i  \leq M$, for any polynomial $P_i$ of degree $i$, there exists a neural network $f_M$ of size that depends only on $i$ such that $f_i = P_i $. Suppose we are given a polynomial $P$ of degree $M+1$. By the induction hypothesis, we first reconstruct the polynomial $Q(X) := X^{M}$ as the output of one neuron. Now, $\sigma \circ Q$ has degree $M \times m > M$. Hence, $A_{\sigma \circ Q}$ is the set of polynomials of degree at most $2m$. In particular, we can compute  $P$ as the output of a neural network by adding an additional hidden layer.

	\end{proof}
	
	\begin{thm}\label{thm:descriptionRGNN}
		For any query $Q $ of RGC2, there exists a rational GNN that expresses $Q$ uniformly over all graphs.
		

		
		
		
	\end{thm}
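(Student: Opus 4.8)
The plan is to proceed by structural induction on the construction of RGC2 queries, following the three-layer definition $\Omega_0 \subseteq \Omega+ \subseteq \Omega$, and to build an explicit rational GNN for each case. The key enabling observation is that a rational GNN can simulate any polynomial-activation GNN, since by Lemma~\ref{lemma:polycompute} a feedforward network with a fixed non-linear polynomial activation (which is itself a rational function with no real pole) can compute any univariate polynomial of bounded degree; hence it suffices to exhibit, for each RGC2 query, a GNN with polynomial activations and sum-aggregation that expresses it uniformly. The base layer $\Omega_0$ is easy: $\mathsf{Col}_i(x)$ and $\lnot\mathsf{Col}_i(x)$ are read directly off the initial feature vector $\xi^0$, and $\exists^{\geq K}y\,(E(x,y)\land \mathsf{Col}_i(y))$ is handled exactly as in the ReLU construction of Remark~\ref{exampleGNNGC2} — sum the $i$-th coordinate over the neighborhood and threshold around $K$ using a bounded-degree polynomial that approximates the clipped step function on the integer grid (a polynomial interpolant through $\{0,1,\dots,K-1,K,K+1,\dots\}$ suffices since neighborhood counts are integers, and only finitely many distinct values can occur up to the first where the output saturates).

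Next I would handle the $\Omega+$ layer, i.e.\ the queries $\mathcal{H}^{(m)}(\phi)$ with $\phi\in\Omega_0$. Here the crucial simplification is that the only counting that survives inside the nested structure is the single $\exists^{\geq K}$ at the bottom ($\phi\in\Omega_0$), and every outer layer of $\mathcal{H}$ only asks $\exists^{\geq 1}$. The point is that $\exists^{\geq 1}y\,(E(x,y)\land \psi(y))$ can be expressed uniformly with a rational activation even without a clipping function: if $b(v):=\psi(G,v)\in\{0,1\}$ (or is $\epsilon$-separated), then $\sum_{w\in N(x)} b(w)$ is a nonnegative integer which is $0$ iff no neighbor satisfies $\psi$ and $\geq 1$ otherwise; applying a rational map $r$ with $r(0)=0$, $r(t)\in[1-\epsilon,1]$ for all integers $t\geq 1$, and no real pole (e.g.\ a scaled $t^2/(1+t^2)$ composed with a linear rescaling, or a bounded-degree polynomial interpolant valid on the finite set of attainable sums at a given iteration) collapses the count to a Boolean. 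Iterating this $m$ times gives $\mathcal{H}^{(m)}(\phi)$; the GNN size depends only on $m$ and $K$, not on the graph, because the intermediate embeddings are always $\epsilon$-separated Booleans so the relevant sums range over a bounded set of integers at each fixed iteration. I would package this as an "$\epsilon$-separation is preserved under one application of $\mathcal{H}$" sublemma and induct on $m$.

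The final layer is trivial: if a rational GNN expresses $\psi$ uniformly with separation $\epsilon$, then composing its output with the affine map $t\mapsto 1-t$ (absorbed into the last combination layer) expresses $\lnot\psi$ uniformly with the same $\epsilon$, so the $\Omega := \{\lnot\psi\}\cup\Omega$ closure costs nothing. Assembling the three cases completes the induction and yields the theorem.

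The main obstacle I anticipate is making the thresholding rigorous \emph{uniformly in the graph} while keeping the GNN size independent of $n=|G|$. The subtlety is that $\sum_{w\in N(x)}b(w)$ can be as large as $n-1$, so a polynomial interpolant that is correct on $\{0,1,\dots,n-1\}$ would have degree growing with $n$. The resolution — which the paper hints at via the phrase "a signal that can become unbounded" — is that for $\exists^{\geq 1}$ we do not need a polynomial; a \emph{rational} function with no real pole such as $t\mapsto 1-\tfrac{1}{(1+t)^2}$ (or $t\mapsto t^2/(1+t^2)$) already maps $0\mapsto 0$ and every positive real into $[1-\epsilon,1]$ for a fixed $\epsilon$ that does \emph{not} depend on how large $t$ gets. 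This is precisely the asymmetry exploited in the impossibility proof (Theorem~\ref{main:theorem}): rational GNNs can test "is the count positive" but cannot test "is the count $\geq p$" behind another aggregation, which is why RGC2 forbids $\exists^{\geq K}$, $K>1$, except at the innermost position where no further aggregation follows and a bounded-degree polynomial interpolant suffices. I would therefore take care to use polynomial interpolation only at the $\Omega_0$ level (finite count range, fixed $K$) and pole-free rational saturation at every outer $\mathcal{H}$ level, and verify the separation constant can be chosen once and for all.
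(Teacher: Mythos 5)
Your overall plan — induct through the three layers $\Omega_0$, $\Omega+$, $\Omega$, handle $\exists^{\geq K}$ by polynomial interpolation only at the innermost level, and reduce to Lemma~\ref{lemma:polycompute} — matches the paper's proof, and your base case is essentially identical (the paper uses the interpolant $X(X-1)\cdots(X-(K-1))$, which is $0$ on $\{0,\dots,K-1\}$ and $\geq 1$ on all larger integers). Where you genuinely diverge is the $\Omega+$ step. You re-normalize at every hop: after each sum-aggregation you apply a pole-free rational saturation such as $t\mapsto ct^2/(1+ct^2)$ to collapse the count back into $\{0\}\cup[c,1)$, so the signal stays bounded throughout. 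The paper instead keeps the signal unbounded and uses the \emph{identity} as the combine function, maintaining only the one-sided invariant ``exactly $0$ if false, at least $1$ if true,'' which is trivially preserved by summation over neighbors; this shows the stronger fact that a purely \emph{polynomial} GNN already expresses RGC2, which your construction does not yield (your saturation step is genuinely non-polynomial). Your version buys bounded intermediate embeddings and makes explicit why only $\exists^{\geq 1}$ is tolerated in the outer layers; the paper's version is simpler and sharper. Both are valid.

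Two precision points you should fix. First, the parenthetical ``(or is $\epsilon$-separated)'' in your $\Omega+$ step is not a sufficient hypothesis: if false-values are merely $\leq\epsilon$ rather than exactly $0$, the sum over a neighborhood of $n-1$ false neighbors can reach $(n-1)\epsilon$ and cross any fixed threshold, destroying uniformity — this is exactly the obstruction recorded in the paper's appendix remark about sigmoids. Your concrete saturation functions do satisfy $r(0)=0$, so the argument goes through, but the invariant you must carry is ``exactly $0$ on false,'' not $\epsilon$-separation. Second, the phrase ``a polynomial interpolant through $\{0,1,\dots,K-1,K,K+1,\dots\}$ \dots up to the first where the output saturates'' is misleading: a bounded-degree polynomial cannot saturate, and no polynomial approximates a clipped step on all of $\mathbb{N}$. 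What you need (and what the paper uses) is a polynomial that vanishes on $\{0,\dots,K-1\}$ and is $\geq 1$ on $\{K,K+1,\dots\}$, with no upper bound required; your subsequent rational saturation then absorbs the unbounded output. With these two adjustments the proposal is correct.
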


	\begin{proof}
		Our proof is very similar to the one presented in Appendix \ref{appendix:proof}, and actually shows that a \emph{polynomial} GNN will do the job. Let $Q$ be a query of RGC2 of depth $d$,  that we decompose in subformulas $(Q_1, \cdots, Q_d)$. By definition, $Q_1 \in \Omega_0$ and all subqueries $Q_i$ for $i \in \{1, \cdots, d-1\}$ are in the positive fragment $\Omega_{+}$. Only $Q=Q_d$ is potentially in $\Omega$. We will prove by induction on $i\in \{1, \cdots, d-1\}$ that there is a polynomial GNN that outputs  after $d$ iterations $\xi \in \R^{d}$ such that for every $i \in \{1, \cdots, d-1\}$ the coordinate $\xi_{i}$ is greater than $1$ if the query $Q_i$ is verified (for the considered vertex and graph) and returns exactly $0$ if the query is not verified. The subqueries belonging to $\Omega_0$  constitute our ``base case'', and we make standard induction on the depth when the query is in $\Omega_{+} - \Omega_0$. 
		
		\emph{Base case:} We first prove that every query of $\Omega_0$ can be expressed. Any query of $\Omega_0$ is composed of the following  queries: 
		
		1) $\mathsf{Col}_{i} $ or $\lnot \mathsf{Col}_{i}$ for some $i \in [\ell]$. 
		Since the initial feature embedding $\xi^{0}(v,G) \in \{0,1\}^{\ell}$ encodes the color of the node $v$, one can construct a combine function that returns: i) in first coordinate $1$ if the $i$-th coordinate is $1$, and $0$ when it is $0$, and ii) zeroes in all $d-1$ coordinates. This insures that at the first iteration, the GNN expresses $Q_1$.

		2) If the query is of the form $\exists^{\geq K}y E(x,y) Col_i(y) $ for some $i \in [\ell]$ and for some $K \in \N$. $\xi^{1}(v)$ is a $0-1$ vector constructed as in case 1) above. One can construct a polynomial function $P: \R \rightarrow \R$ (taking as input only the sum of the signals from the neigbhors) that is equal to $0$ on every point of $\{0, \cdots, K-1\}$, and greater than $1$ on $\N - \{0, \cdots, K-1\}$. This can be achieved, for instance, by interpolation with the polynomial $P=X(X-1)(X-2)\cdots(X-(K-1))$, and using Lemma \ref{lemma:polycompute}. Concretely, 
		$$\xi^{2}(G,v) = \tilde{P}( \sum_{w \in N_{G}(v)} \xi^{1}(G,w) )$$
		where $\tilde{P}$ is a vector with coordinates that are polynomials. The second coordinate of $\tilde{P}$ (corresponding to the subquery considered) is $P$. The other coordinates of $\tilde{P}$ and the others are the identity function. It is easy to see that $\xi^{2}(G,v)$ has the desired property.
		
		\medskip
		
		\emph{Induction step:} For $i\geq 2$, Suppose we are given a subformula of $\Omega_{+}$, then $Q_i(x) = \exists^{\geq 1}y (E(x,y) \land Q_{i-1}(y)) $ with $Q_{i-1} \in \Omega_{+}$. By the induction hypothesis, we can construct a GNN, whose output $\xi \in \R^{d}$ verifies: for every $j \in \{1, \cdots, i-1\}$, $\xi_{j}$ is at least $1$ if $Q_{j}$ is verified and equal to $0$ otherwise. Suppose we are given such GNN. At the update phase, when summing over the neighbors, one gets a signal equal to $0$ if there is no neighbor verifying $Q_{i-1}$, and a signal of value at least $1$ if one of them does. Simply using the combine function that returns the same signal for all the coordinates the $i$-th coordinate gives $\xi_{i}$  expressing $Q_i$. More precisely, we have
		$$\xi^{i}(G,v) = A_i \xi^{i-1}(G,v) + B_i \sum_{w \in N_{G}(G,v)} \xi^{i-1}(G,w) $$
		where the combine function is the identity map $ \R^{d} \rightarrow \R^{d}$, $A_i \in \R^{d \times d}$ has same first $i-1$ rows as $A_{i-1}$ and other rows are set to zero. $B_i \in  \R^{d \times d}$ has the same $i-1$ rows as $B_{i-1}$  and $i-$th row verifies $B_{i,i-1}=1$. All first $i-1$ coordinates of $\xi^{i}$ remains the same, and the new $i$-th coordinate expresses the desired query.

		Finally, if $Q = Q_d = \lnot Q_{d-1} $ with $Q_{d-1} \in \Omega$. By the result obtained above, we are in possession of a polynomial GNN that is at least $1$ when $Q_{d-1}$ is verified, and exactly $0$ when not. Simply consider the update:
		$$\xi^{d}(v,G) =  A_d \xi^{i-1}(G,v) $$
		Where the first $d-1$ rows of $A_d$ are the same as $A_{d-1}$ and the last row verifies $A_{d,d-1}=-1$. This insures that final query can be expressed by the GNN.  
		
	\end{proof}

	\section{Discussion and open problems}\label{sec:openquestions}
	
	
	The activation functions of GNNs have an impact on the logic they can express. Analyzing rigorously how this expressivity is altered is valuable as it can help in selecting an appropriate architecture for a given problem.  In this regard, the efficient  universal approximation properties of rational neural networks \cite{boulle2020rational} may lead to believe that rational GNNs have maximal expressivity from a formal logical standpoint. Our results show that it is not the case as rational GNNs have strictly weaker expressivity than ReLU GNNs. Therefore, non polynomial activations do not confer GNNs superior uniform expressivity, although they do provide feedforward neural networks universal  approximation power.
	An interesting path for future research is to understand whether this desirable property of ReLU GNNs confer them a significant advantage  over rational GNNs on graphs of bounded order, which may be  relevant for practical applications. 
	
	It is also essential to note that expressivity is just one facet of practical use of GNNs and the related machine learning algorithms. This article does not delve into other crucial aspects such as the GNN's ability to generalize from provided data, and the computational efficiency of learning and inference. In particular, we have not investigated the ability of a GNN to learn a logical query from examples -- namely from a sample complexity standpoint. We also believe that theoretical investigations on the expressivity of GNNs and logical expressivity can  suggest potential avenues to integrate logic-based and statistical reasoning in machine learning methods, and in particular in GNN architectures. These investigations also include questions of independent mathematical interest,  some of which remain open. We list some below  that are closely related to the results presented in this article.
	
	\vspace{0.1cm}
	
	\textbf{Question 1:} Can GNNs with sigmoidal activations and $\agg= \mathsf{sum}$ uniformly express GC2 queries? 
	
	\vspace{0.1cm}
	
	\textbf{Question 2:} Do polynomial GNNs and rational GNNs have the same uniform expressivity?
	
	\textbf{Question 3:} What is the impact of the aggregation function beyond those considered in this article (sums, of polynomials with bounded degree)?
	
	
	
	

	\textbf{Question 3:} What is the exact fragment of GC2 uniformly expressible by GNNs with rational  activations ?
	
	\medskip
	
	\textbf{Acknowledgements:} The author would like to thank Amitabh Basu (Johns Hopkins University), Pablo Barceló (Universidad Católica de Chile) and Edinah Gnang (Johns Hopkins University) for valuable discussions about this work.

	
	


	


	\bibliographystyle{alpha}
	\bibliography{sample}

	\newpage
	
	\appendix

\section{Proof that piecewise linear GNNs (and AGG=SUM) are as expressive as GC2}\label{appendix:proof}

\begin{proof}
We will prove the following claim from which  the Theorem \ref{theorem1} will be an immediate corollary. 

\medskip

\emph{Claim. Let $Q$ be a query in GC2, and let $\text{sub}(Q) = (Q_1, Q_2, \cdots, Q_d)$ be an enumeration
of the subformulas of $Q$. Then, there exists a ReLU GNN returning  $\xi^{t} $ such that for graph $G$ and any vertex $v \in V(G)$,  $\xi^{t}(G,v) \in \{0, 1\}^{d}$,  and for any $i \in \{1, \cdots, d\}$, $\xi^{t+1}_i(G,v) = 1\iff Q_i(G,v) = 1$. }

The overall GNN will take as input the graph $G$ as well as for each node of $v$, $\xi^{0}(G,v)  \in \{0,1\}^{\ell}$ encoding the colors of each node of $G$; and after $d$ iterations, outputs for each node a vector $\xi^{d}(G,v) \in \{0,1\}^{d}$. Furthemore, at each intermediate iteration $t\in \{1, \cdots, d-1\}$, the constructed GNN will verify $\xi^{t}(G,v) \in \{0,1\}^{d}$. This property will be crucial for the inductive argument  to go through. 

In order to prove the claim, we simply need to find appropriate $(\comb_t)_{1\leq t \leq d}$ and $(\agg_t)_{1\leq t \leq d}$ functions such that if $\xi^{t}$ verifying the update rule:
$$\xi^{t+1}(G,v) = \comb_t(\xi^{t}(G,v), \agg_t(\{\{ \xi^{t}(G,v)): v \in \mathcal{N}_G(v)\}\}))$$
then $\xi^{t}$ computes the given query $Q$. We will prove that we can find such $\comb_t$ and $\agg_t$ functions by induction on the depth of $Q$.

\bigskip

\noindent  \emph{\textbf{Base case}}. If $Q$ has depth $1$, $Q $ is one of $\ell$ color queries, and this can be computed via a GNN in one iteration, whose underlying neural network is the projection onto the $i-$th coordinate, i.e.

- $\comb_0(x,y) = \text{proj}_i(x) $

- $\agg_0 $ can be chosen as any aggregation function.

\bigskip

\noindent \emph{\textbf{Induction step.}} Let suppose  $Q$ be a query of depth $d > 1$.  By the induction hypothesis, we here suppose that we have access to some $(\comb_t)_{1 \leq t \leq d-1}$ and $(\comb_t)_{1 \leq t \leq d-1}$ such that for any $j \in \{1, \cdots, d-1\}$ and for any $1 \leq i \leq j$, $Q_j(v)=1 \iff \xi_j^{i}(v) = 1$. Recall that $\text{sub}(Q) = (Q_1, Q_2, \cdots, Q_d)$, i.e. the $Q_i$s form an enumeration
of the subformulas of $Q$. In particular, $Q_d = Q$. In the following construction, $\xi^{i}$ keeps ``in memory'' the output of the subformulas $Q_j$, for $j \leq i$. 
For each case described below, we show that we are able to construct $\comb_d$ and $\agg_d$ in the following form:

- $\comb_d: \mathbb{R}^{d} \times \mathbb{R}^{d} \rightarrow \mathbb{R}^{d}, (x,y) \mapsto \sigma(A_dX + B_dY + c_d)$, where $\sigma(\cdot)=\min(1, \max(0, \cdot))$ is the clipped ReLU. Note that a clipped ReLU can be computed by a Neural Network with ReLU activations.

- $\agg_d$ is the sum function.

\medskip

Due to the inductive nature of GC2, one of the following holds:
\begin{itemize}
\item Case 1: there exist subformulas $Q_j$ and $Q_k$ of $Q$, such that $\ell(Q_j) + \ell(Q_k) = d$ and $Q = Q_j \land Q_k$
    \item Case 2: $Q(x)=\lnot Q_j(x)$ where $Q_j$ is a query of depth $d-1$
    \item Case 3: there exists a subformula $Q_j$ of $Q$ such that $\ell(Q_j) = d-1  $ and $Q(x)=\exists ^{\geq N} y(E(x,y)\land  Q_j(y))$ 
\end{itemize}

We will first give general conditions on the update of the combinations and aggregate functions, and then conclude that these conditions are actually be met by constant $\comb$ and $\agg$ functions: 

-  $A_d$ gets the same first $d-1$ rows as $A_{d-1}$. 

- $B_d  $ gets the same first $d-1$ rows as $B_{d-1}$

- $c_d$ get the same first $d-1$ coordinates as $c_{d-1}$,

 \begin{itemize}
\item Case 1: The $d$-th row of $A_d$ gets all zeros except: $(A_d)_{jd} = 1$, $(A_d)_{kd} =1 $.  The $d$-th row of $B_t$ is set to $0$. Set $(c_d)_d=-1$.
    \item Case 2:  The $d$-th row of $A_d$ gets all zeros except: $(A_d)_{jd} = -1$. The $d$-th row of $B_d$ is set to $0$.  Set $(c_d)_d=1$.
  \item Case 3: The $d$-th row of $A_d$ is set to $0$. The $d$-th row of $B_d$ is set to $0$ except $(B_d)_{jd} = 1$.   Set $(c_d)_d=-N+1$.
\end{itemize}
What remains to prove is the following: 
 $$\text{For any $i \in \{1, \cdots, d\}$}, \quad \xi_i^{d}(G,v) = 1 \iff  Q_i(G,v) = 1$$
Due to the update rule, the first $d-1$ coordinates of $\xi^{d}(G,v)$ are the same as $\xi^{d-1}(G,v)$. Hence, the property is true for $i \leq d-1$ by immediate induction. We are left to show that $ \quad \xi_d^{d}(G,v) = 1 \iff  Q(G,v) = Q_d(G, v) = 1$.  Here, we use the fact that:
\begin{itemize}
    \item for every node $v$, every coordinate of $\xi^{d-1}(v)$ is in $\{0,1\}$.
    \item $\sigma$ is the clipped ReLU activation function: $\sigma(\cdot)=\min(1, \max(0, \cdot))$.
\end{itemize}  
Since $\xi^{d}(G,v) = \sigma(A_{d-1}\xi^{d-1}(G,v) + B_{d-1}\sum_{w \in N(v)} \xi^{d-1}(G,w) + c_{d-1})$
This follows from an immediate discussion on the three cases described previously, and ends the induction on $d$.

An important feature of the update described above is that  $(A_d, B_d, c_d)$ verifies all the conditions imposed for every $(A_t, B_t, c_t)_{1\leq t \leq d}$ to compute all subformulas $Q_t$. The updates of $A_t$, $B_t$ and $c_t$ are only made for the $t$-th row and $t$-th entry, and do not depend on the previous columns but only on the query $Q$.  Hence, we may as well start from the beginning by setting $(A_t, B_t, c_t)$ to $(A_d, B_d, c_d)$, instead of changing these matrices at every iteration $t$. 
In these conditions, the combination function $\comb_t$, parametrized by $A_t$, $B_t$ and $c_t$ can be defined independently of $t$. The same  holds for $\agg_t$ as it can be chosen as the sum for any iteration.

\end{proof}

\begin{remark} The proof of \cite{grohe2021logic} presents an approach where the ReLU GNN is non-recurrent (each $\comb_t$ in that case depends on $t$).
The fact that $\xi^{t}$ is a $\{0,1\}$-vector is also crucial so that the argument goes through.
In particular, both proofs do not extend to other activations (such as sigmoid), as there is no function $f: \mathbb{R} \rightarrow \mathbb{R}$  such that 
for some $0 < \epsilon < \frac{1}{2}$,  and for any
    $x_1, \cdots, x_N \in [0, \frac{1}{2} -\epsilon] \cup [\frac{1}{2}+\epsilon, 1]$, $$f(\sum_{i=1}^{N} x_i ) \geq \frac{1}{2 } + \epsilon \iff \text{there are at least $p$ $x_i$'s such that $x_i \geq \frac{1}{2} + \epsilon$}  $$
On the one hand, it is necessary that $f(x) \geq \frac{1}{2} + \epsilon$ for any $x \geq p$.  Furthermore, it is possible to pick $x_1, \cdots, x_N$ such that for  any $i$, $x_i \in [0, \frac{1}{2}-\epsilon]$ but $x_1 + \cdots x_N \geq p$. This in turn would imply $f(\sum_{i=1}^{N}x_i) \geq \frac{1}{2} + \epsilon$ but all $x_i$'s are smaller than $\frac{1}{2}$.
    The previous property becomes verified if one restricts to $\{0\} \cup \{1\}$ and $f_p(.) = \text{cReLU}(\cdot -p +1) $ where $\text{cReLU}$ is the clipped ReLU.
\end{remark}

\section{Logic background: general definitions}\label{appendix:logic}

\begin{definition}\label{def:FOL} A {\em first order logic} is given by a countable set of symbols, called the {\em alphabet} of the logic:

\begin{enumerate}
    \item Boolean connectives: $\neg, \vee, \wedge, \rightarrow, \leftrightarrow$
    \item Quantifiers: $\forall, \exists$
    \item Equivalence/equality symbol: $\equiv$
    \item Variables: $x_0, x_1, \ldots$ (finite or countably infinite set)
    \item Punctuation: $($, $)$ and $,$.
    \item \begin{enumerate}
        \item A (possibly empty) set of {\em constant} symbols.
        \item For every natural number $n\geq 1$, a (possibly empty) set of {\em $n$-ary function} symbols.
        \item For every natural number $n\geq 1$, a (possibly empty) set of {\em $n$-ary relation} symbols.
    \end{enumerate}
\end{enumerate}
\end{definition}
\begin{remark}
Items 1-5 are common to {\em any} first order logic. Item 6 changes from one system of logic to another. Example: In Graph theory, the first order logic has: 
\begin{itemize}
    \item no constant symbols 
    
    \item  no function symbol
    
    \item  a single $2$-ary relation symbol $E$ (which is interpreted as the edge relation between vertices). When graphs are supposed labeled with $\ell $ colors: $\ell$ function symbols $\mathsf{col}_1, \cdots, \mathsf{col}_{\ell}$. $\mathsf{col}_{i}(v \in G)$.
\end{itemize} 
The set of symbols from Item 6 is called the {\em vocabulary} of the logic. It will be denoted by $S$ and the first order logic based on $S$ will be denoted by $FO(S)$.

\end{remark}

\begin{definition}\label{def:term} The set of {\em terms} in a given first order logic $FO(S)$ is a set of strings over the alphabet defined inductively as follows:

\begin{enumerate}
    \item Every variable and constant symbol is a term.
    \item If $f$ is an $n$-ary function symbol, and $t_1, \ldots, t_n$ are terms, then $f(t_1,\ldots, t_n)$ is a term.
\end{enumerate}
\end{definition}

\begin{definition}\label{def:formula} The set of {\em formulas} in a given first order logic is a set of strings over the alphabet defined inductively as follows:

\begin{enumerate}
    \item If $t_1, t_2$ are terms, then $t_1 \equiv t_2$ is a formula.
    \item If $R$ is an $n$-ary relation symbol, and $t_1, \ldots, t_n$ are terms, then $R(t_1,\ldots, t_n)$ is a formula.
    \item If $\phi$ is a formula, then $\neg \phi$ is a formula.
    \item If $\phi_1, \phi_2$ are formulas, then $(\phi_1 \vee \phi_2)$, $\phi_1 \wedge \phi_2$, $\phi_1 \rightarrow \phi_2$ and $\phi_1 \leftrightarrow \phi_2$ are formulas.
    \item If $\phi$ is a formula and $x$ is a variable, then $\forall x \phi$ and $\exists x\phi$ are formulas.
\end{enumerate}
\end{definition}

The set of all variable symbols that appear in a term $t$ will be denoted by $\var(t)$. The set of {\em free variables in a formula} is defined using the inductive nature of formulas:

\begin{enumerate}
    \item $\free(t_1 \equiv t_2) = \var(t_1)\cup \var(t_2)$
    \item $\free(R(t_1,\ldots, t_n)) = \var(t_1)\cup \ldots \cup \var(t_n)$
    \item $\free(\neg\phi) = \free(\phi)$
    \item $\free(\phi_1 \star \phi_2) = \var(\phi_1)\cup \var(\phi_2)$, where $\star \in \{\vee,\wedge, \rightarrow, \leftrightarrow\}$
    \item $\free(\forall x\phi)= \free(\phi)\setminus \{x\}$
    \item $\free(\exists x\phi)= \free(\phi)\setminus \{x\}$
\end{enumerate}

\begin{remark}
The same variable symbol may be a free symbol in $\phi$, but appear bound to a quantifier in a subformula of $\phi$.
\end{remark}

\begin{definition}\label{def:sentence} The set of {\em sentences} in a first order logic are all the formulas with no free variables, i.e., $\{\phi: \free(\phi)=\emptyset\}$.
\end{definition}
\begin{definition}\label{def:structure} Given a first order logic $FO(S)$, an {\em $S$-structure} is a pair $\mathcal{U} = (A,I)$ where $A$ is a nonempty set, called the {\em domain/universe} of the structure, and $I$ is a map defined on $S$ such that \begin{enumerate}
    \item $I(c)$ is an element of $A$ for every constant symbol $c$.
    \item $I(f)$ is a function from $A^n$ to $A$ for every $n$-ary function symbol $f$.
    \item $I(R)$ is a function from $A^n$ to $\{0,1\}$ (or equivalently, a subset of $A^n$) for every $n$-ary relation symbol $R$.
\end{enumerate} 

Given an $S$-structure $\mathcal{U} = (A,I)$ for $FO(S)$, an {\em assignment} is a map from the set of variables in the logic to the domain $A$. An {\em interpretation} of $FO(S)$ is a pair $(\mathcal{U}, \beta)$, where $\mathcal{U}$ is an $S$-structure and $\beta$ is an assignment.
\end{definition}

We say that an interpretation $(\mathcal{U}, \beta)$ {\em satisfies} a formula $\phi$, if this assignment restricted to the free variables in $\phi$ evaluates to 1, using the standard Boolean interpretations of the symbols of the first order logic in Items 1-5 of Definition \ref{def:FOL}. 

\begin{definition}\label{def:depth}
The {\em depth} of a formula $\phi$ is defined recursively as follows. If $\phi$ is of the form in points 1. or 2. in Definition~\ref{def:formula} , then its depth is $1$. If $\phi = \neg \phi'$ or $\phi = \forall x\phi'$ or $\phi = \exists x\phi'$, then the depth of $\phi$ is the depth of $\phi'$ plus $1$. If $\phi = \phi_1 \star \phi_2$ with $\star \in \{\vee,\wedge, \rightarrow, \leftrightarrow\}$, then the depth of $\phi$ is one plus the maximum of the depths of $\phi_1$ and $\phi_2$.

This is equivalent to the depth of the tree representing the formula, based on the inductive definition. The {\em length/size} of the formula is the total number nodes in this tree. Up to constants, this is the number of leaves in the tree, which are called the {\em atoms} of the formula. 
\end{definition}

\end{document}